\def\eqref#1{equation~\ref{#1}}
\def\1{\bm{1}}
\def\vk{{\bm{k}}}
\def\vv{{\bm{v}}}
\def\mA{{\bm{A}}}
\def\mB{{\bm{B}}}
\def\mC{{\bm{C}}}
\def\mD{{\bm{D}}}
\def\mE{{\bm{E}}}
\def\mF{{\bm{F}}}
\def\mG{{\bm{G}}}
\def\mH{{\bm{H}}}
\def\mI{{\bm{I}}}
\def\mK{{\bm{K}}}
\def\mM{{\bm{M}}}
\def\mN{{\bm{N}}}
\def\mP{{\bm{P}}}
\def\mQ{{\bm{Q}}}
\def\mR{{\bm{R}}}
\def\mS{{\bm{S}}}
\def\mT{{\bm{T}}}
\def\mV{{\bm{V}}}
\def\mW{{\bm{W}}}
\def\mY{{\bm{Y}}}
\DeclareMathAlphabet{\mathsfit}{\encodingdefault}{\sfdefault}{m}{sl}
\SetMathAlphabet{\mathsfit}{bold}{\encodingdefault}{\sfdefault}{bx}{n}
\newcommand{\E}{\mathbb{E}}
\newcommand{\R}{\mathbb{R}}
\DeclareMathOperator*{\argmin}{arg\,min}
\lstdefinestyle{promptcode}{
  basicstyle=\ttfamily\footnotesize,
  breaklines=true,
  columns=fullflexible,
  keepspaces=true,
  showstringspaces=false,
  frame=single,
  framerule=0.35pt,
  rulecolor=\color{black!20},
  aboveskip=2pt,
  belowskip=2pt
}
\newcommand{\casehintbad}[1]{\vspace{-2pt}\par{\small\color{red!70!black}\textbf{Error:} #1}\vspace{4pt}}
\newcommand{\casehintgood}[1]{\vspace{-2pt}\par{\small\color{green!50!black}\textbf{OK:} #1}\vspace{4pt}}
\newcommand{\ckpt}[1]{\textbf{\large #1}}
\lstdefinestyle{casecode}{
  basicstyle=\ttfamily\scriptsize,
  breaklines=true,
  breakatwhitespace=true,
  columns=fullflexible,
  keepspaces=true,
  showstringspaces=false,
  frame=single,
  framerule=0.35pt,
  rulecolor=\color{black!20},
  aboveskip=2pt,
  belowskip=2pt
}
\theoremstyle{plain}
\newtheorem{theorem}{Theorem}[section]
\newtheorem{proposition}[theorem]{Proposition}
\newtheorem{lemma}[theorem]{Lemma}
\newtheorem{corollary}[theorem]{Corollary}
\theoremstyle{definition}
\theoremstyle{remark}
\newtheorem{remark}[theorem]{Remark}
\title{Beyond Hard Writes and Rigid Preservation: \\
Soft Recursive Least-Squares for Lifelong LLM Editing\textsuperscript{\S}}
\author{
\mbox{Xinyu Wang}$^{1,3,*}$
\and
\mbox{Sicheng Lyu}$^{1,2,3,*}$
\and
\mbox{Yu Gu}$^{1,*}$
\and
\mbox{Jerry Huang}$^{2,4}$
\and
\mbox{Peng Lu}$^4$\\
\mbox{Yufei Cui}$^1$
\and
\mbox{Xiao-Wen Chang}$^{1,\dagger}$\\
\affiliations
$^1$McGill University,
$^2$Mila--Quebec AI Institute,
$^3$SimpleWay.AI,
$^4$Universit\'e de Montr\'eal\\
\emails
\{xinyu.wang5, sicheng.lyu, yu.gu4\}@mail.mcgill.ca,
chang@cs.mcgill.ca
}
\begin{document}

\maketitle

\let\thefootnote\relax\footnotetext{\textsuperscript{\S}Accepted by IJCAI--ECAI 2026.}
\let\thefootnote\relax\footnotetext{$^*$Equal contribution.}
\let\thefootnote\relax\footnotetext{$^\dagger$Corresponding author.}

\begin{abstract}
Model editing updates a pre-trained LLM with new facts or rules without retraining while preserving unrelated behavior.
In real deployment, edits arrive as long streams, creating a \emph{plasticity--stability dilemma}: repeated locate-then-edit ``hard writes'' can accumulate interference over time, while rigid preservation constraints may protect only explicitly constrained directions, allowing past edits or unconstrained behaviors to deviate.

We propose \textbf{RLSEdit}, a recursive least-squares editor for long sequential editing.
RLSEdit formulates editing as an online quadratic optimization with soft constraints, minimizing a cumulative key-value fitting objective together with two regularizers that control deviation from the pre-trained weights and from a designated anchor mapping.
This objective admits an efficient Woodbury-based online recursion, with per-edit cost independent of history length and scaling only with the current edit size.
We further provide deviation bounds and an asymptotic characterization of the adherence--preservation trade-off in the many-edits regime. Experiments on CounterFact and ZsRE across multiple model families show stable scaling to $10\mathsf{K}$ edits, outperforming strong baselines in both edit success and holistic stability, while retaining early edits and preserving general capabilities on GLUE and held-out reasoning/code benchmarks.

Code will be at \href{https://github.com/Euphoria040201/RLSEdit}{here}.
\end{abstract}
\section{Introduction}

Despite the large amount of knowledge they store within their parameters, large language models (LLMs)~\citep{qwen-2.5,gpt-4,deepseek-r1} inevitably contain outdated, incomplete, or incorrect knowledge~\citep{de-cao-etal-2021-editing,mitchell2022fast} when statically deployed without re-training or access to external knowledge bases. Due to the high computational cost of retraining from scratch, many applications require updating models through \emph{edits} to a subset of parameters, with the goal of integrating new facts or rules while preserving general model behavior~\citep{meng2022locating}.
While early model editors largely focused on single or small-batch updates, practical deployments are inherently sequential: edits arrive continuously, and the editor must remain reliable after each edit~\citep{DBLP:conf/nips/HartvigsenSPKG23,gupta-etal-2024-model}.

The many-edits regime presents a dilemma. To remain useful over long streams, an editor must both memorize incoming information and preserve knowledge acquired from earlier edits. In practice, failures often appear as two coupled forms of forgetting:
\begin{enumerate}[leftmargin=*]
    \item \emph{Retroactive Edit Forgetting}: future edits can overwrite edits applied in the past.
    \item \emph{General-Ability Degradation}: edits can deteriorate out-of-scope reasoning and language understanding.
\end{enumerate}

Existing sequential editors typically fail for complementary reasons.
\emph{Locate-then-edit} approaches~\citep{meng2022locating, meng2022mass} perform \emph{hard writes}, forcing the model to learn new associations with limited control over previously stored knowledge. As the number of edits accumulates, these updates may interfere with one another, leading to instability and retroactive forgetting of earlier facts. Conversely, \emph{null-space} editors~\citep{fang2410alphaedit, sun-etal-2025-mitigating-negative, lyu2026evoeditevolvingnullspacealignment} project updates into a feasible subspace conditioned on an anchor mapping, thereby preserving the associations explicitly constrained by that mapping. However, as edits are repeatedly applied, maintaining complete preservation would require a growing set of constraints, while loosening these constraints may still allow degradation in general abilities or retention of previous facts.
As a consequence, neither paradigm is sufficient for long sequential editing.

We focus on parameter-editing methods. This distinction is mechanism-based rather than dataset-based. ~\citep{wang2025resona}
Alternatively, sequential parameter editing can be viewed as \emph{online regularized least squares} on a layer-wise key-value surrogate~\citep{sayed2003fundamentals}. Each edit contributes a quadratic fitting term, while preservation is controlled by two explicit deviation terms: one penalizing deviation from the initial model, and another penalizing deviation from a designated \emph{anchor mapping}. This gives a soft-constraint formulation in which learning new edits and preserving previous knowledge are optimized within a single objective.

From this view, we propose \textbf{RLSEdit}, a recursive least-squares editor for long sequential editing. RLSEdit formulates editing as a quadratic objective and derives an efficient online recursion via the Woodbury identity~\citep{sherman1950adjustment, woodbury1950inverting, hager1989updating}. Its per-edit cost is independent of the number of previous edits and instead scales with the current edit rank/size. We further provide theoretical deviation bounds and an asymptotic characterization of the edit-preservation trade-off under long edit streams.

To summarize, our main contributions are:
\begin{itemize}[leftmargin=*]
    \item We formulate lifelong editing as online regularized least squares with explicit parameter-deviation and anchor-deviation controls, providing a soft-constraint alternative between hard writing and hard preservation.
    \item We derive a Woodbury-based online update whose per-edit cost is independent of the number of past edits.
    \item We provide deviation bounds and an asymptotic characterization of the adherence-preservation trade-off under long edit streams.
    \item We evaluate RLSEdit on \texttt{Llama-3}~\citep{llama3} and \texttt{Qwen2.5}~\citep{qwen-2.5} after $10\mathsf{K}$ edits, showing stronger edit success, improved early-edit retention, and better preservation of general capabilities on held-out reasoning and code benchmarks.
\end{itemize}
\section{Related Work}
\label{sec:related}

We review model editing methods through a \emph{soft versus hard constraint} lens. We explain how different editing methods balance between making new edits and preserving previous edits and knowledge, and why long-sequential editing is challenging. In particular, we consider layer-wise input-output pairs, where we edit a single linear map in layer $\ell$ (e.g., an attention projection). Given an input prompt, we run a forward pass and collect a set of module-level \emph{input-output} feature pairs $(\vk,\vv)$ at selected token positions, where $\vk\in\mathbb{R}^{d_k}$ is the input activation to the edited map and $\vv\in\mathbb{R}^{d_v}$ is the corresponding output activation. For the $t$-th edit, we stack $u_t$ such pairs to form $\mK_t\in\mathbb{R}^{u_t\times d_k}$ and $\mV_t\in\mathbb{R}^{u_t\times d_v}$.

\paragraph{The Writing and Preservation Trade-off.}
The goal of editing is to perform targeted updates to the model parameters to learn new key-value associations. With soft or no direct constraint on how this changes the parameters, this can be expressed as a constrained LS problem:
\begin{equation}
    \label{eq:rw_rome}
    \widehat{\mW}\ \in\ \underset{\mW}{\argmin}\ \|\mK\mW-\mV\|_F^2
    \quad \text{s.t.}\quad
    \mK^\star \mW = \mV^\star .
\end{equation}
Here $(\mK,\mV)$ denotes the current key-value associations contained within the model parameters, and $(\mK^\star,\mV^\star)$ denotes the edit constraints defined by the new set of edits.

Alternatively, one can attempt to preserve greater amounts of existing knowledge by restricting updates to a feasible subspace so that performance on a \emph{designated preservation set} (e.g., an anchor/background mapping) remains unchanged.
This can be expressed as
\begin{equation}
\label{eq:rw_null}
\resizebox{0.91\linewidth}{!}{$
\underset{\Delta_t}{\min} \|\mK_t(\mW_{t-1}\!+\!\Delta_t)\!-\!\mV_t\|_F^2 \; (+\lambda R(\cdot))
\;\text{s.t.}\;
\mK_{\mathrm{pres},t}\Delta_t\!=\!0.
$}
\end{equation}
This leads to a soft fit to the new edits under a hard preservation constraint. In \textsc{AlphaEdit}~\citep{fang2410alphaedit}, $\mK_{\mathrm{pres},t}$ is typically fixed to a chosen anchor/background set, preserving only what is explicitly constrained. \textsc{LangEdit}~\citep{sun-etal-2025-mitigating-negative} retains the same principle but updates multilingual preservation statistics online, so that $\mK_{\mathrm{pres},t}$ and the induced projector evolve over time.

\paragraph{Moving to Longer Edit Streams.}
When only a single batch of edits needs to be applied, existing editing methods have shown strong performance. However, such settings are not fully representative of real-world LLM use cases. Models often exist in environments where they are deployed for long periods of time and where the number of required edits can grow continuously. In such longer edit streams, existing methods can suffer for several reasons: hard satisfaction of each batch can induce growing interference between incoming edits and prior knowledge, leading to retroactive forgetting and degradation in out-of-scope performance, while attempting to preserve too much prior knowledge can lead to insufficient learning of new associations.

\paragraph{Soft Updates with Preservation.}
MEMIT~\citep{meng2022mass} uses a soft LS objective over a batch of past and new associations, but is not formulated as a long-stream sequential objective. To address the long-edit stream setting, we enforce \textit{soft} adherence and preservation within a quadratic objective that encompasses both hard regimes as limiting cases.

\paragraph{Relation to Continual Learning.}
RLSEdit is also related to continual learning methods such as EWC~\citep{kirkpatrick2017overcoming} and online Laplace~\citep{NEURIPS2018_f31b2046}, which use quadratic penalties to restrict parameter drift from previously learned solutions. In our objective, the term $\lambda^2\|\mW-\mW_0\|_F^2$ plays a similar regularizing role. However, our setting is different: RLSEdit targets long sequential LLM editing through layer-wise key-value constraints, combines the drift penalty with an anchor-mapping term, and admits an exact Woodbury-based recursive update whose per-edit cost is independent of the edit history length.

\paragraph{Beyond Direct Parameter Writes.}
Several recent lines differ from direct streaming parameter updates mainly in editing mechanism.
\textsc{AnyEdit}~\citep{jiang2025anyedit} broadens the scope of editable knowledge beyond simple factual statements, while \textsc{UnKE}~\citep{deng2024unke} targets unstructured knowledge.
For lifelong settings, \textsc{WISE}~\citep{wang2024wise} separates edited knowledge from pre-trained knowledge via dual memories and routing, and \textsc{RECIPE}~\citep{chen2024lifelong} externalizes updates as retrieval-augmented continuous prompts with gating. 
These approaches can be evaluated on related editing tasks, but they intervene through additional components such as memory, retrieval, or prompting. They are therefore complementary to our focus on an efficient, single-objective streaming parameter editor. Model editing is also related to continual learning, especially quadratic-regularization methods such as EWC and online Laplace approximations~\citep{kirkpatrick2017overcoming, NEURIPS2018_f31b2046}. 
However, RLSEdit is designed for post-training LLM editing and couples cumulative key--value least squares, an anchor-mapping penalty, and a Woodbury recursion with history-independent per-edit cost.
\section{Methodology}
We present our editing framework in three parts.
We first introduce a recursive least-squares (RLS) formulation that accumulates all editing residuals in a single quadratic objective, with penalties on both deviation from the initial model parameters and deviation from an anchor mapping in Section~\ref{subsec:rls_editor}.
We then analyze the computational complexity of the resulting updates and compare them with existing sequential editors under long edit streams in Section~\ref{subsec:complexity}.
Finally, we contrast \emph{soft} and \emph{hard} constraint designs, showing how existing editors can be viewed as limiting cases of our formulation in Section~\ref{subsec:comparison_nullspace}.

\subsection{A recursive least squares editor}
\label{subsec:rls_editor}

\paragraph{Setup}
We consider editing a single linear map $\mW\!\in\!\mathbb{R}^{d_k{\times}d_v}$, such as a projection matrix in a transformer layer.
Each edit provides a set of key-value constraints
$(\mK_t,\mV_t)$, where $\mK_t\!\in\!\mathbb{R}^{u_t\times d_k}$ and $\mV_t\!\in\!\mathbb{R}^{u_t\times d_v}$, with $u_t$ being the number of contexts collected for the $t$-th edit.
Layer-wise edit adherence is measured by the residual $\left\|\mK_t\mW-\mV_t\right\|_F$.
The goal of our method is two-fold. First, it should incorporate \emph{all} edits up to time $t$. Second, it should control deviation from the initial weights $\mW_0$ and from a set of anchor pairs $(\mK_0,\mV_0)$. We therefore introduce two regularization terms for these two purposes.

\subsubsection{Regularized Least-Squares Equation}
At time $t$, our objective is to obtain the optimal weight $\mW_t^*$ that minimizes:
\begin{equation}
    \begin{split}
        \label{eq:LS_main}
    \mW_t^* \coloneq \underset{\mW}{\argmin} \sum_{i=1}^{t} & \left\|\mK_i \mW - \mV_i\right\|_F^2 \\
    + \lambda^2 & \left\|\mW-\mW_0\right\|_F^2 \\
    + \mu^2 & \left\|\mK_0 \mW - \mV_0\right\|_F^2,
    \end{split}
\end{equation}
where $\lambda$ and $\mu$ are hyperparameters.
The first term fits all edit pairs observed so far.
The second term penalizes parameter drift from the initial weight $\mW_0$, and the third term encourages the edited weight to preserve the anchor mapping $\mK_0\mW\approx\mV_0$.

To rewrite Equation \ref{eq:LS_main} as a standard matrix least-squares problem, define
\begin{equation}
    \begin{aligned}
        \mA_t
        &=
        \begin{bmatrix}
        \lambda \mI_{d_k} \\
        \mu \mK_0 \\
        \mK_1 \\
        \vdots \\
        \mK_t
        \end{bmatrix},
        &
        \mB_t
        &=
        \begin{bmatrix}
        \lambda \mW_0 \\
        \mu \mV_0 \\
        \mV_1 \\
        \vdots \\
        \mV_t
        \end{bmatrix}.
    \end{aligned}
\end{equation}
Then Equation \ref{eq:LS_main} becomes
\begin{equation}
    \label{eq:LSP_main}
    \mW_t^* = \underset{\mW}{\argmin} \left\|\mA_t\mW - \mB_t\right\|_F^2.
\end{equation}

To make the closed-form solution explicit, consider one output column at a time.
For a column $w$ of $\mW$ and the corresponding column $b_t$ of $\mB_t$, Equation \ref{eq:LSP_main} reduces to
\[
    \min_w \|\mA_t w - b_t\|_2^2.
\]
Taking the derivative and setting it to zero gives the normal equation
\[
    \mA_t^\top \mA_t w_t^* = \mA_t^\top b_t.
\]
Stacking the solutions for all output columns gives the matrix normal equation
\[
    \mA_t^\top \mA_t \mW_t^* = \mA_t^\top \mB_t.
\]
Let
\begin{align}
    \mS_t &\coloneq \mA_t^\top \mA_t
    = \lambda^2 \mI + \mu^2 \mK_0^\top \mK_0 + \sum_{i=1}^t \mK_i^\top \mK_i, 
    \label{eq:closed_form_s}
    \\
    \mT_t &\coloneq \mA_t^\top \mB_t
    = \lambda^2 \mW_0 + \mu^2 \mK_0^\top \mV_0 + \sum_{i=1}^t \mK_i^\top \mV_i.
    \label{eq:closed_form_t}
\end{align}
When $\lambda > 0$, $\mS_t\succ 0$, so the minimizer is unique and is given by
\begin{equation}
    \label{eq:closed_form}
    \mW_t^* = \mS_t^{-1} \mT_t.
\end{equation}
This solution jointly fits all edit pairs up to time $t$, while the two regularizers keep the edited weights close to $\mW_0$ and preserve the anchor mapping $\mK_0\mW\approx\mV_0$.

\begin{figure}[t]
    \centering
    \includegraphics[width=\linewidth]{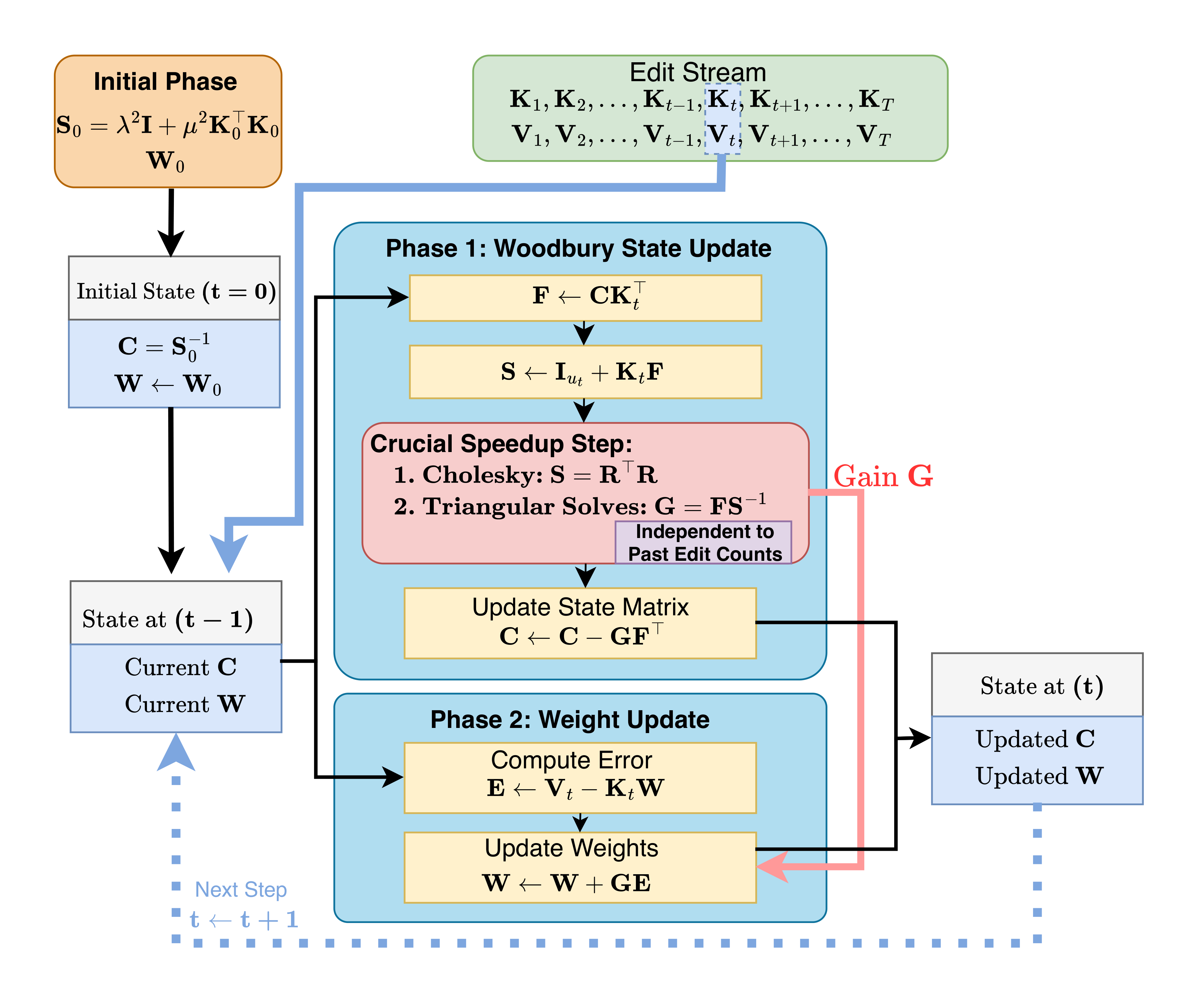} 
    \caption{The recursive workflow of our RLS-Woodbury editor. The process alternates between updating the covariance state via the Woodbury identity and updating the weights. The highlighted block shows how the update avoids a full $O(d_k^3)$ factorization during the edit stream by solving only small $u_t\times u_t$ systems.}
    \label{fig:rls_workflow}
\end{figure}

\subsubsection{Efficient Recursion via Normal Equations}
\label{subsubsec:normal_eq}

Direct computation of~Equation \ref{eq:closed_form} requires inverting $\mS_t$, which is expensive when repeated over a long edit stream.
We therefore derive an efficient recursive update.
From Equation \ref{eq:closed_form}, the minimizer $\mW_t^*$ satisfies
\begin{equation}
    \label{eq:NE_main}
    \left(\mA_t^\top \mA_t\right)\mW_t^* = \mA_t^\top \mB_t.
\end{equation}
Let $\mC_t\coloneq \mS_t^{-1}$.
Using Equation \ref{eq:closed_form_s}, we have
\begin{equation}
    \label{eq:C_inverse_update}
    \mC_t^{-1} = \mC_{t-1}^{-1} + \mK_t^\top \mK_t.
\end{equation}
Next, define
\[
    \mF_t \coloneq \mC_{t-1}\mK_t^\top \in \mathbb{R}^{d_k\times u_t},
    \qquad
    \mH_t \coloneq \mI_{u_t}+\mK_t\mF_t .
\]
By the Sherman--Morrison--Woodbury identity,
\begin{equation}
    \label{eq:woodbury_main}
    \mC_t
    =
    \mC_{t-1}
    -
    \mF_t \mH_t^{-1} \mF_t^\top .
\end{equation}
In implementation, we factorize $\mH_t=\mR_t^\top\mR_t$ by Cholesky and use triangular solves, avoiding explicit matrix inversion.

From Equation \ref{eq:closed_form_t}, we also have
\[
    \mT_t = \mT_{t-1} + \mK_t^\top \mV_t .
\]
Combining this with the covariance recursion gives the weight update
\begin{equation}
    \label{eq:W_recursion_main}
       \mW_t^* = \mW_{t-1}^* + \mC_t\mK_t^\top\left(\mV_t - \mK_t\mW_{t-1}^*\right).
\end{equation}
Equivalently, since $\mC_t\mK_t^\top=\mF_t\mH_t^{-1}$, the update only requires solving the small $u_t\times u_t$ system in $\mH_t$.
Letting $\mE_t \coloneq \mV_t-\mK_t\mW_{t-1}^*$, each edit requires only updating $\mC_t$ via Equation \ref{eq:woodbury_main} and then updating $\mW_t^*$ via Equation \ref{eq:W_recursion_main}.

\begin{algorithm}[t]
    \caption{RLS-Woodbury Editing}
    \label{alg:rls_woodbury}
    
    \begin{algorithmic}[1]
        \Require Initial weight $\mW_0$; 
        anchor pair $(\mK_0,\mV_0)$;
        penalties $(\lambda, \mu)$;
        edit stream $\{(\mK_t,\mV_t)\}_{t=1}^T$.
        \Ensure Edited weight $\mW_T^*$.
        
        \State $\mS_0 \gets \lambda^2 \mI_{d_k} + \mu^2 \mK_0^\top \mK_0$
        \State $\mT_0 \gets \lambda^2 \mW_0 + \mu^2 \mK_0^\top \mV_0$
        \State $\mS_0 = \mR_0^\top \mR_0$ \Comment{Cholesky factorization}
        \State $\mC_0 \gets \mS_0^{-1}$ \Comment{Computed via triangular solves}
        \State $\mW_0^* \gets \mC_0 \mT_0$
        
        \For{$t=1,2,\dots,T$}
            \Statex \hspace*{\algorithmicindent} Covariance update
            \State $\mF_t \gets \mC_{t-1}\mK_t^\top$
            \State $\mH_t \gets \mI_{u_t} + \mK_t \mF_t$
            \State $\mH_t = \mR_t^\top \mR_t$ \Comment{Cholesky factorization}
            \State $\mY_t \gets \mF_t \mR_t^{-1}$ \Comment{Triangular solve}
            \State $\mC_t \gets \mC_{t-1} - \mY_t \mY_t^\top$
        
            \Statex \hspace*{\algorithmicindent} Weight update
            \State $\mE_t \gets \mV_t - \mK_t \mW_{t-1}^*$
            \State $\mG_t \gets \mF_t \mH_t^{-1}$ \Comment{Gain matrix}
            \State $\mW_t^* \gets \mW_{t-1}^* + \mG_t \mE_t$
        \EndFor
        
        \State \Return $\mW_T^*$
    \end{algorithmic}
\end{algorithm}

\subsection{Complexity analysis}
\label{subsec:complexity}

We report the per-edit cost at step $t$.
Multiplying $\mM\in\R^{m\times n}$ and $\mN\in\R^{n\times p}$ costs $O(mnp)$, and solving a dense $n\times n$ linear system costs $O(n^3)$.

\subsubsection{RLS-Woodbury Updates.}
RLSEdit maintains $\mC_t=\mS_t^{-1}\in\R^{d_k\times d_k}$ and updates it via Woodbury using
\[
    \mF_t=\mC_{t-1}\mK_t^\top\in\mathbb{R}^{d_k\times u_t},
    \qquad
    \mH_t=\mI_{u_t}+\mK_t\mF_t\in\mathbb{R}^{u_t\times u_t}.
\]
The covariance-state update is dominated by forming these products, solving the resulting $u_t\times u_t$ system, and applying the low-rank correction:
\[
\text{Covariance update:}\qquad
O\!\left(d_k^2u_t + d_ku_t^2 + u_t^3\right).
\]
For the weight update, we reuse the same $u_t\times u_t$ solve to apply the gain
$\mG_t=\mC_t\mK_t^\top=\mF_t\mH_t^{-1}\in\R^{d_k\times u_t}$ and update $\mW_t$ using the residual $\mE_t$.
This step is dominated by the key-value multiplication against $d_v$ outputs:
\[
\text{Weight update:}\qquad
O\!\left(d_kd_vu_t + d_ku_t^2\right).
\]
Overall, the per-edit runtime is therefore
\[
\text{Per edit:}\qquad
O\!\left(d_k^2u_t + d_kd_vu_t + d_ku_t^2 + u_t^3\right),
\]
which simplifies to $O\!\left(d_k^2u_t + d_kd_vu_t\right)$ when $u_t\ll d_k,d_v$.

\subsubsection{Comparison to other sequential editors.}
For a fair long-sequential comparison, we focus on existing sequential editors.
\textsc{AlphaEdit} introduces \emph{hard preservation} by projecting the weight update onto the null space of a fixed preserved-knowledge set.
In sequential editing, it also regularizes against disrupting previously updated knowledge represented by accumulated key-value pairs.
A direct sequential implementation therefore involves history-dependent dense matrices over the feature dimension and repeated projection or factorization steps.
Let $m_{t-1}$ denote the number of previously updated pairs and let $u_t$ denote the number of pairs in the current edit.
The dominant cost can include a dense $d_k\times d_k$ factorization, together with Gram matrix construction over the accumulated pairs:
\[
\text{Per edit:}\qquad
O(d_k^3)\;+\;O\!\left(d_k^2(m_{t-1}+u_t)\right).
\]
Even with more careful incremental implementations, the cost remains tied to the accumulated preservation set or the dimension of the remaining feasible subspace.
In contrast, RLSEdit avoids an $O(d_k^3)$ factorization during the edit stream by maintaining $\mC_t=(\mA_t^\top\mA_t)^{-1}$ and using a Woodbury recursion.
Each edit only solves a $u_t\times u_t$ system, where typically $u_t\ll d_k$, making the update more efficient in long-edit settings, as shown in \ref{tab:efficiency_update_time}.

\subsection{Hard versus Soft Constraints}
\label{subsec:comparison_nullspace}
RLSEdit can be viewed as a soft-constraint alternative to two common hard regimes. Locate-then-edit methods such as ROME enforce the new association as a hard write while softly penalizing background deviation. Null-space methods such as \textsc{AlphaEdit} instead enforce hard preservation by restricting updates to a feasible subspace. In contrast, RLSEdit keeps both edit adherence and preservation soft through the cumulative objective
\[
\begin{aligned}
\mW_t^\star
= \arg\min_{\mW}\;&
\sum_{i=1}^{t}\|\mK_i\mW-\mV_i\|_F^2 + \lambda^2\|\mW-\mW_0\|_F^2 \\
&+ \mu^2\|\mK_0\mW-\mV_0\|_F^2 .
\end{aligned}
\]
As $\mu\to\infty$, the anchor mapping becomes a hard constraint. Similarly, assigning infinite weight to selected past fitting terms recovers hard preservation through the standard penalty method. Thus, RLSEdit interpolates between hard-write and hard-preserve behavior while remaining in a soft--soft regime for finite $(\lambda,\mu)$.
\section{Theoretical Analysis}
\label{subsec:theory}

We provide deviation bounds in terms of $(\lambda,\mu)$:
$\lambda$ controls global parameter deviation from $\mW_0$, and $\mu$ controls deviation of the anchor mapping $\mK_0\mW$.

\begin{figure}
    \centering
    \includegraphics[width=\linewidth]{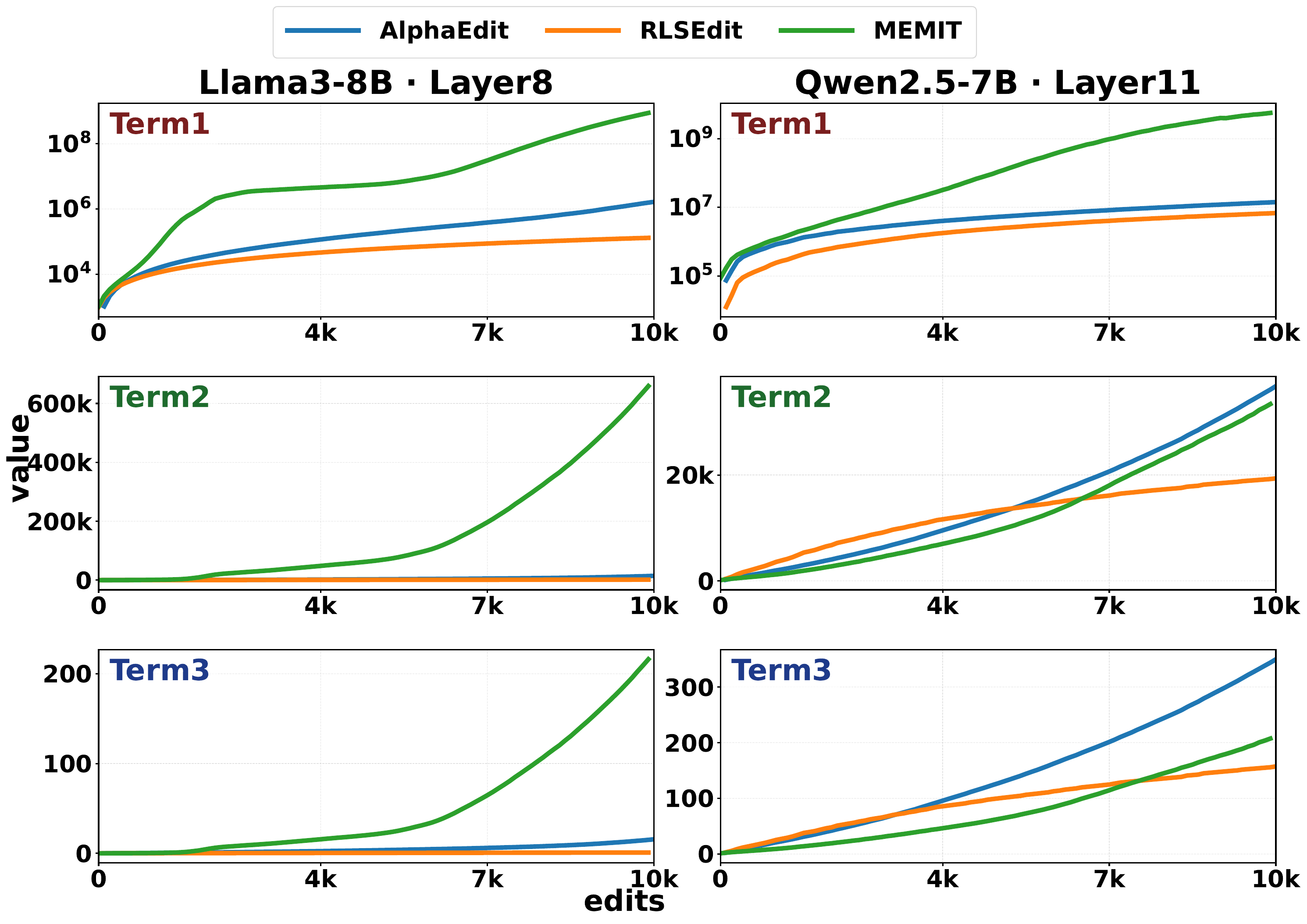}
    \caption{\textbf{Evolution of objective terms over $10\mathsf{K}$ edits.}
    We compare \textsc{RLSEdit} against baselines (\textsc{AlphaEdit}, \textsc{MEMIT}) on three metrics:
    \textbf{Term 1} ($\|\mK_t \mW-\mV_t\|_F^2$) measures the fitting error for the current edit;
    \textbf{Term 2} ($\|\mW-\mW_0\|_F^2$) measures parameter drift from the initial weights; and
    \textbf{Term 3} ($\|\mK_0 \mW-\mV_0\|_F^2$) measures the preservation error on the anchor mapping.
    The results show that \textsc{RLSEdit} maintains lower values across all three terms, supporting the stability of our soft-constraint formulation.}
    \label{fig:terms_decomposition}
\end{figure}

\begin{theorem}[Global deviation bounds]
\label{prop:global_deviation}
Let $\mW_t^*$ be the minimizer of $J_t(\mW)$ and define
$\mR_t\coloneq\mV_t-\mK_t\mW_{t-1}^*$. Let $\sigma_{\min}(\mK)$ denote the smallest singular value of $\mK$.

\begin{enumerate}[label=(\roman*),leftmargin=*]
\item (\emph{Parameter deviation}) If $\lambda>0$, then for any $T\ge 1$,
\[
\left\|\mW_T^*-\mW_0\right\|_F
\le
\frac{1}{\lambda^2}
\left\|
\sum_{t=1}^T \mK_t^\top
\left(\mV_t-\mK_t\mW_0\right)
\right\|_F .
\]

\item (\emph{Anchor-map deviation}) If $\mu>0$, then for any $T\ge 1$,
\[
\left\|\mK_0(\mW_T^*-\mW_0)\right\|_F
\le
\frac{1}{\mu}\sum_{t=1}^T \left\|\mR_t\right\|_F .
\]
\end{enumerate}
\end{theorem}

Theorem \ref{prop:global_deviation} shows that $\lambda$ and $\mu$ control different types of deviation.
A larger $\lambda$ keeps the solution closer to the original weights $\mW_0$, while a larger $\mu$ better preserves the anchor mapping $\mK_0\mW\approx\mV_0$.
The residual $\mR_t$ measures how well the current edit is satisfied before the $t$-th update.
Thus, increasing $\lambda$ or $\mu$ makes the update more conservative, but may also increase the edit residual when the new edit conflicts with the preservation constraints.

\begin{table*}[ht!]
    \centering
    \small
    \renewcommand{\arraystretch}{1}
    \resizebox{0.65\textwidth}{!}{
        \begin{tabular}{lc|ccccc}
            \toprule[1.5pt]
            {\textbf{Method}} & {\textbf{Model}} & 
            \textbf{Efficacy $\uparrow$} & \textbf{Generalization $\uparrow$} & \textbf{Specificity $\uparrow$} & \textbf{Fluency $\uparrow$} & \textbf{Consistency $\uparrow$} \\
            \midrule
            \midrule

            \textbf{RLSEdit} (Ours)   & \multirow{5}{*}{\rotatebox{90}{\texttt{Llama-3-8B}}}
            & $\mathbf{89.94_{\pm0.75}}$ & $\mathbf{72.84_{\pm1.21}}$ & $\mathbf{60.56_{\pm0.35}}$ & $\mathbf{615.58_{\pm4.34}}$   & $\mathbf{26.27_{\pm0.35}}$ \\
            AlphaEdit & & $66.78_{\pm3.19}$ & $58.27_{\pm1.59}$ & $51.79_{\pm0.70}$ & $\underline{489.91_{\pm33.83}}$  & $\underline{4.59_{\pm0.39}}$ \\
            ROME      & & $47.57_{\pm0.10}$ & $48.45_{\pm0.33}$ & $\underline{52.52_{\pm0.44}}$ & $465.02_{\pm17.88}$  & $1.83_{\pm0.14}$ \\
            MEMIT     & & $49.73_{\pm1.44}$ & $49.24_{\pm0.48}$ & $51.54_{\pm0.68}$ & $323.01_{\pm16.40}$ & $3.45_{\pm1.62}$ \\
            FT        & & $\underline{74.76_{\pm0.00}}$ & $\underline{64.49_{\pm0.00}}$ & $39.69_{\pm0.00}$ & $342.42_{\pm0.20}$   & $1.31_{\pm0.00}$ \\
            \midrule[1pt]

            \textbf{RLSEdit} (Ours)   & \multirow{5}{*}{\rotatebox{90}{\texttt{Qwen2.5-7B}}}
            & $\mathbf{94.45_{\pm1.07}}$ & $\underline{68.55_{\pm0.47}}$ & $\underline{73.37_{\pm0.44}}$ & $\mathbf{625.74_{\pm0.71}}$  & $\underline{31.62_{\pm0.81}}$ \\
            AlphaEdit & & $\underline{94.10_{\pm0.42}}$ & $\mathbf{70.29_{\pm2.30}}$ & $\mathbf{75.29_{\pm0.65}}$ & $\underline{623.51_{\pm0.24}}$  & $31.37_{\pm0.49}$ \\
            ROME      & & $35.70_{\pm1.36}$ & $37.16_{\pm1.19}$ & $65.20_{\pm1.42}$ & $619.67_{\pm16.98}$ & $\mathbf{31.79_{\pm3.59}}$ \\
            MEMIT     & & $53.13_{\pm0.72}$ & $51.39_{\pm0.49}$ & $51.52_{\pm0.92}$ & $532.38_{\pm24.31}$ & $1.63_{\pm2.22}$ \\
            FT        & & $65.72_{\pm0.00}$ & $56.46_{\pm0.00}$ & $45.23_{\pm0.00}$ & $324.70_{\pm0.04}$  & $1.87_{\pm0.03}$ \\
            \bottomrule[1.5pt]
        \end{tabular}}
    \caption{
        \footnotesize CounterFact results on \texttt{Llama-3-8B} and \texttt{Qwen2.5-7B}, comparing \textbf{RLSEdit} with the baselines. 
        We report mean $\pm$ standard deviation over \textbf{3} random seeds, evaluated on the full CounterFact test set after completing all sequential edits (10K edits in total, with a batch size of 100). We evaluate on five metrics: Efficacy, Generalization, Specificity, Fluency, and Consistency. The best results are highlighted in bold, and the second-best results are underlined.
    }
    \label{tab:cf_two_blocks}
\end{table*}

\subsection{Asymptotic Scaling}
\label{subsubsec:asymptotic}

To connect $(\lambda,\mu)$ to the many-edits regime, we view Equation \ref{eq:LS_main} as a ridge-type estimator for a \emph{layer-wise} linear mapping.
We use the statistical model
\begin{equation}
    \label{eq:noise_model_main}
    \mV_i = \mK_i\mW^\star + \mE_i,\qquad \sup_i \E\left\|\mE_i\right\|_F^2<\infty,
\end{equation}
where $\mE_i$ captures approximation error from other layers, context variability, and mismatch in the linearized output.

Sequential edits do not need to be i.i.d.
We only assume long-run stability of the empirical second moments, namely that there exist matrices $\bm{\Sigma}_k$ and $\bm{\Sigma}_{kv}$ such that
\begin{equation}
\label{eq:moment_conv_main}
    \begin{split}
        &\frac{1}{t}\sum_{i=1}^t \mK_i^\top \mK_i \to \bm{\Sigma}_k, \quad
        \frac{1}{t}\sum_{i=1}^t \mK_i^\top \mV_i \to \bm{\Sigma}_{kv},\\
        &\text{s.t.}\;\bm{\Sigma}_k \succ 0\,\text{ on the relevant subspace}.
    \end{split}
\end{equation}
Allow $\lambda=\lambda_t$ and $\mu=\mu_t$ to depend on $t$ and define
\[
\alpha_t \coloneq \lambda_t^2/t,\qquad \beta_t \coloneq \mu_t^2/t.
\]
For asymptotic analysis, it is convenient to work with the normalized objective
$\tilde J_t(\mW) \coloneq J_t(\mW)/t$, which has the same minimizer as $J_t$ for each fixed $t$:
\begin{equation}
    \label{eq:J_normalized_main}
    \begin{split}
         \tilde J_t(\mW) =
        &\frac{1}{t}\sum_{i=1}^t \left\|\mK_i\mW - \mV_i\right\|_F^2 \\
        & + \alpha_t\left\|\mW-\mW_0\right\|_F^2 + \beta_t\left\|\mK_0\mW - \mV_0\right\|_F^2.
    \end{split}
\end{equation}

\begin{proposition}[Asymptotic behavior of the RLS editor]
    \label{prop:rls_asymptotics}
    Assume Equation \ref{eq:noise_model_main}, and the moment convergence in Equation \ref{eq:moment_conv_main}. Also assume
    $\sup_i \E\left\|\mK_i\right\|_F^4\!<\!\infty$ and $\sup_i \E\left\|\mV_i\right\|_F^4\!<\!\infty$.
    Let $\mW_t^*$ be the minimizer of $J_t(\mW)$, with $\alpha_t\!=\!\lambda_t^2/t$ and $\beta_t\!=\!\mu_t^2/t$.
    Suppose that $\alpha_t\!\to\!\alpha$ and $\beta_t\!\to\!\beta$ for some $\alpha,\beta\!\in\![0,\infty)$ as $t\!\to\!\infty$.
    Define the limiting quadratic risk $\mathcal{R}(\mW)$ through the limiting second moments in Equation \ref{eq:moment_conv_main}.
    Then:
    \begin{enumerate}[label=(\roman*),leftmargin=*]
        \item The normalized objectives $\widetilde J_t$ converge pointwise to the regularized limiting risk
            \begin{equation}
                \label{eq:pop_ridge_risk}
                \begin{split}
                    \mathcal{R}_{\mathrm{ridge}}\left(\mW\right) \coloneq &\mathcal{R}\left(\mW\right)+ \alpha \left\|\mW-\mW_0\right\|_F^2 \\
                    &+ \beta\left\|\mK_0 \mW - \mV_0\right\|_F^2.
                \end{split}
            \end{equation}
        \item The function $\mathcal{R}_{\mathrm{ridge}}$ is strictly convex and admits a unique minimizer $\mW^\dagger$.
        \item The RLS editor converges to this minimizer:
            \begin{equation}
                \label{eq:rls_converges_to_Wdagger}
                \mW_t^* \to \mW^\dagger \qquad \text{as } t\to\infty.
            \end{equation}
    \end{enumerate}
\end{proposition}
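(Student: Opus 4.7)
The plan is first to establish the pointwise convergence in Part (i) by an algebraic expansion. Writing
\[
\|\mK_i\mW-\mV_i\|_F^2 = \mathrm{tr}(\mW^\top\mK_i^\top\mK_i\mW) - 2\,\mathrm{tr}(\mW^\top\mK_i^\top\mV_i) + \mathrm{tr}(\mV_i^\top\mV_i),
\]
averaging over $i=1,\dots,t$, and substituting \cref{eq:moment_conv_main} together with $\alpha_t\to\alpha$ and $\beta_t\to\beta$, every $\mW$-dependent term of $\widetilde J_t(\mW)$ converges to the corresponding term of $\mathcal{R}_{\mathrm{ridge}}(\mW)$. The only $\mW$-independent piece is $\tfrac{1}{t}\sum_i\mathrm{tr}(\mV_i^\top\mV_i)$, which is bounded by the uniform $L^4$-bound on $\mV_i$ and can be absorbed into the constant part of $\mathcal{R}(\mW)=\E[\|KW-V\|_F^2]$ under the limiting second-moment model. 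This yields Part (i).

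For Part (ii), $\mathcal{R}_{\mathrm{ridge}}$ is quadratic with Hessian $2(\bm{\Sigma}_k+\alpha\mI+\beta\mK_0^\top\mK_0)$. Strict positive-definiteness is immediate when $\alpha>0$; in the edge case $\alpha=0$, the assumption $\bm{\Sigma}_k\succ 0$ on the relevant subspace together with $\beta\mK_0^\top\mK_0\succeq 0$ on the complement still rules out any null direction. Hence $\mathcal{R}_{\mathrm{ridge}}$ is strictly convex and coercive, and admits a unique minimizer $\mW^\dagger$ characterized by the first-order condition
\[
\bigl(\bm{\Sigma}_k+\alpha\mI+\beta\mK_0^\top\mK_0\bigr)\mW^\dagger = \bm{\Sigma}_{kv}+\alpha\mW_0+\beta\mK_0^\top\mV_0.
\]

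For Part (iii), I would leverage the closed form \cref{eq:closed_form}, $\mW_t^*=\mS_t^{-1}\mT_t$, and normalize by $t$ to write $\mW_t^*=(\tfrac{1}{t}\mS_t)^{-1}(\tfrac{1}{t}\mT_t)$. From \cref{eq:closed_form_s}--\cref{eq:closed_form_t}, $\tfrac{1}{t}\mS_t\to\mS_\infty:=\bm{\Sigma}_k+\alpha\mI+\beta\mK_0^\top\mK_0$ and $\tfrac{1}{t}\mT_t\to\mT_\infty:=\bm{\Sigma}_{kv}+\alpha\mW_0+\beta\mK_0^\top\mV_0$. Since $\mS_\infty\succ 0$ by Part (ii), matrix inversion is continuous in a neighborhood of $\mS_\infty$, so the continuous mapping theorem yields $\mW_t^*\to\mS_\infty^{-1}\mT_\infty=\mW^\dagger$. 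The main technical obstacle is upgrading the convergence of the normalized Gram and cross-moment matrices in \cref{eq:moment_conv_main} to almost sure convergence under the non-i.i.d.\ structure of the edit stream; this is precisely where the uniform fourth-moment bounds enter, via a Chebyshev plus Borel--Cantelli argument along a sparsified subsequence $t_k=\lfloor k^{1+\delta}\rfloor$, with monotonicity of partial sums used to fill the gaps between consecutive $t_k$. Once this moment-convergence step is secured, the remaining algebraic expansions, strict-convexity bookkeeping, and the final continuity step are routine.
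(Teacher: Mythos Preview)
Your proposal is correct and follows essentially the same route as the paper: expand the data-fit term via traces, pass to the limit termwise using \cref{eq:moment_conv_main} and $\alpha_t\to\alpha$, $\beta_t\to\beta$ for (i); compute the Hessian $2(\bm{\Sigma}_k+\alpha\mI+\beta\mK_0^\top\mK_0)$ and read off strict convexity for (ii); and for (iii) use the closed form $\mW_t^*=(\tfrac{1}{t}\mS_t)^{-1}(\tfrac{1}{t}\mT_t)$ together with continuity of inversion at the positive-definite limit. The paper treats the almost-sure upgrade of \cref{eq:moment_conv_main} informally (it simply invokes ``the same probability-1 event used for the empirical-moment convergence''), so your Borel--Cantelli sketch is if anything more careful than what appears there; just note that the monotonicity interpolation you mention works directly only for the PSD-ordered Gram sums $\sum_i\mK_i^\top\mK_i$, while the cross terms $\sum_i\mK_i^\top\mV_i$ need a separate bound between subsequence points.
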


\begin{remark}
Proposition \ref{prop:rls_asymptotics} is an asymptotic characterization for long-run stable edit streams, not a universal guarantee for arbitrary non-stationary deployments.
It does not assume that edits are i.i.d.; the key requirement is the convergence of the empirical second moments in Equation \ref{eq:moment_conv_main}.
If these limits do not exist, then the editor should not be interpreted as converging to a single limiting risk minimizer.
In that case, the relevant guarantee is the finite-time deviation bound in Proposition \ref{prop:global_deviation}.
\end{remark}

\begin{figure*}[ht]
    \centering
    \includegraphics[width=0.9\textwidth]{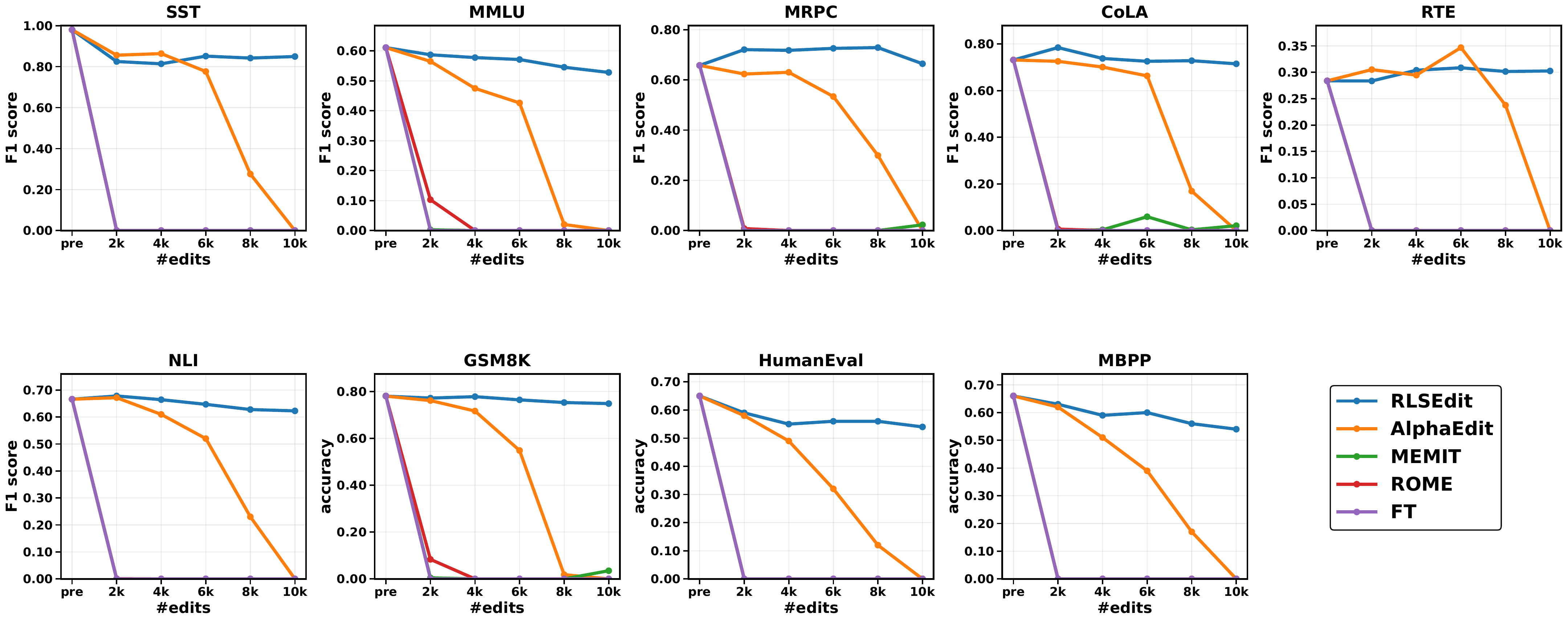}
    \caption{\textbf{General capability preservation.} We evaluate 5 \textsc{GLUE} tasks and additional benchmarks for general knowledge, math reasoning and coding ability (MMLU, GSM8K, HumanEval, MBPP) at multiple editing checkpoints (Pre-edit, 2k--10k edits). \textbf{RLSEdit} is compared against baselines and consistently better preserves the model's general capabilities across tasks and edit scales. The x-axis shows the cumulative number of applied edits, and the y-axis reports the corresponding score (F1 or accuracy).}
    \label{fig:general_results}
\end{figure*}

If $\alpha\!=\!\beta\!=\!0$ (e.g., when $\lambda_t,\mu_t$ are held fixed), then $\mW^\dagger \!=\! \mW^\star$ and $\mW_t^*$ converges to the least-squares limiting minimizer.
If $\alpha\!>\!0$ and/or $\beta\!>\!0$, then $\mW^\dagger$ interpolates between the data-driven optimum $\mW^\star$ and the anchor constraints encoded by $(\mW_0,\mK_0,\mV_0)$:
larger $\alpha$ shrinks $\mW^\dagger$ toward $\mW_0$, and larger $\beta$ enforces $\mK_0\mW^\dagger\!\approx\! \mV_0$ even as $t\!\to\!\infty$.
Thus, increasing $\lambda$ and $\mu$ makes the update more conservative.
This improves preservation, but can reduce edit adherence when the new edit conflicts with the original mapping.
A common policy is to set a deviation budget for the associated penalties, then tune $(\lambda,\mu)$ to satisfy both bounds.

The limits $\mu,\lambda\to\infty$ yield hard anchoring ($\mK_0\mW\!=\!\mV_0$) and frozen parameters ($\mW_t^*\!\to\!\mW_0$).
\section{Experiments and Results}
\label{sec:experiments}

\subsection{Experimental Setup}
\label{subsec:setup}

\paragraph{Models and Baselines.}
We conduct experiments with two backbone models, \texttt{Llama3-8B} and \texttt{Qwen2.5-7B}, against \textsc{AlphaEdit}~\citep{fang2410alphaedit}, \textsc{ROME}~\citep{meng2022locating}, \textsc{MEMIT}~\citep{meng2022mass}, and fine-tuning (\textsc{FT})~\citep{Zhu2020ModifyingMI}.

\paragraph{Datasets and Metrics.}
Following prior work, we mainly use the \textbf{CounterFact} dataset~\citep{meng2022locating}. We report \textbf{Efficacy} (rewrite success), \textbf{Generalization} (paraphrase success), \textbf{Specificity} (neighborhood success), \textbf{Fluency} (generation entropy), and \textbf{Consistency} (reference score). To further test whether the trend holds beyond CounterFact, we also evaluate on \textbf{ZsRE}.

\begin{figure} 
    \centering
    \includegraphics[width=\linewidth]{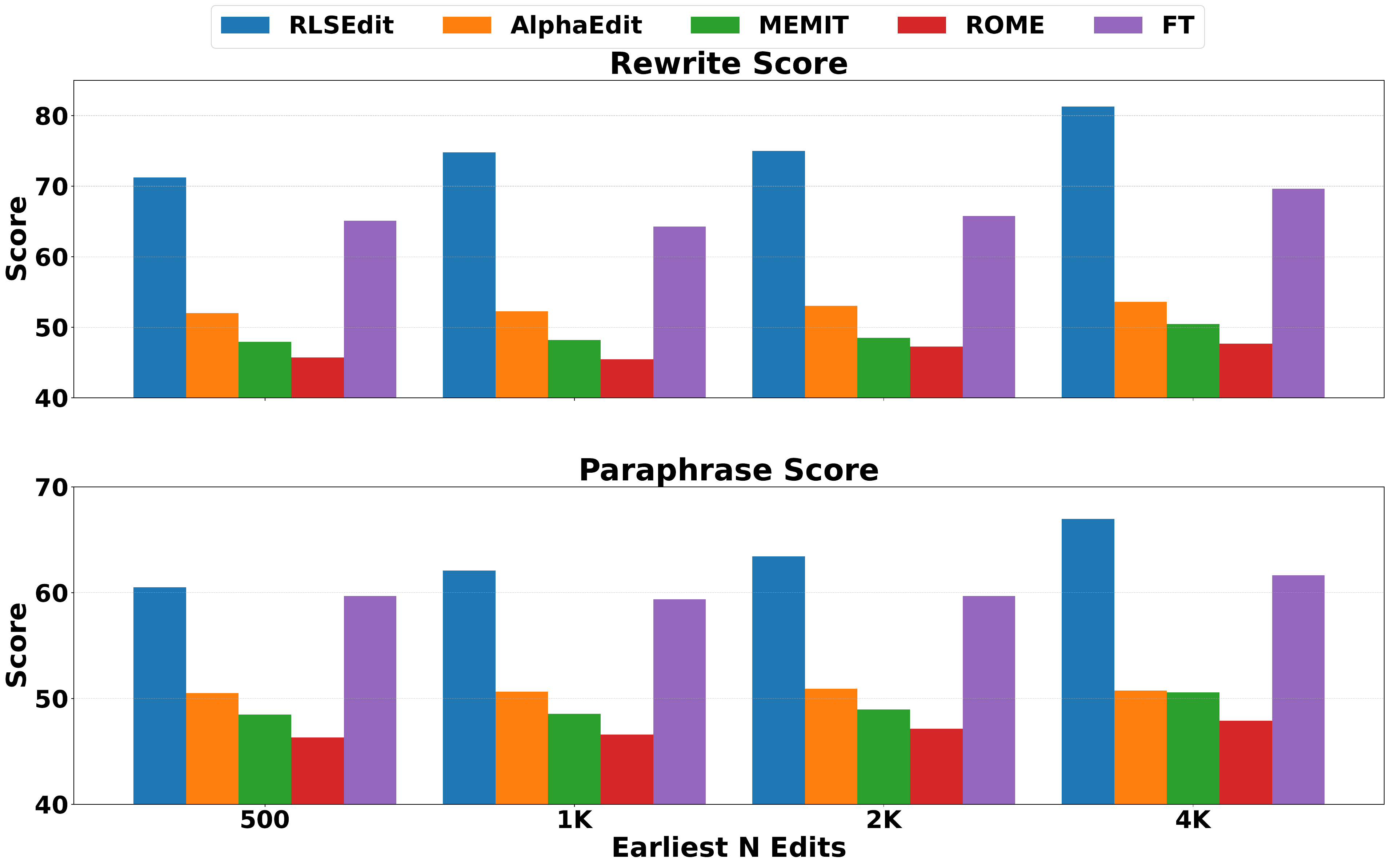}
    \caption{Improvements on early edits. After applying 10K sequential edits, we re-evaluate performance on the earliest edited cases (500, 1K, 2K, 4K). Each bar reports the Rewrite or Paraphrase score. \textbf{RLSEdit} consistently achieves the highest scores across all settings.}
    \label{fig:early_edits}
\end{figure}

\subsection{Main Results}
\label{subsec:results}

\paragraph{Editing Results.} Table 
\ref{tab:cf_two_blocks} reports performance after $10\mathsf{K}$ edits with batch size 100 on CounterFact. On \texttt{Llama3-8B}, \textbf{RLSEdit} achieves the best scores across all five metrics. In particular, it obtains a clear improvement in Efficacy over the second-best method (89.94 vs. 74.76), while also improving Generalization, Specificity, Fluency, and Consistency.

On \texttt{Qwen2.5-7B}, the results are more nuanced. \textbf{RLSEdit} achieves the best Efficacy and Fluency, while \textsc{AlphaEdit} gives slightly higher Generalization and Specificity, and \textsc{ROME} gives the best Consistency. This difference suggests that the editing--preservation trade-off is model-dependent rather than evidence of instability in \textbf{RLSEdit}. Overall, \textbf{RLSEdit} remains competitive with \textsc{AlphaEdit} on \texttt{Qwen2.5-7B} and substantially outperforms \textsc{ROME}, \textsc{MEMIT}, and \textsc{FT} on most metrics.

\begin{table}[t]
    \centering
    \scriptsize
    \setlength{\tabcolsep}{4pt}
    \renewcommand{\arraystretch}{0.95}
    
    \resizebox{0.9\columnwidth}{!}{
    \begin{tabular}{lcccc}
        \toprule[1.1pt]
        \textbf{Metric} & \textbf{ROME} & \textbf{MEMIT} & \textbf{AlphaEdit} & \textbf{RLSEdit} \\
        \midrule
        ZsRE Efficacy $\uparrow$ & 1.52 & 0.96 & \underline{70.04} & \textbf{81.20} \\
        \bottomrule[1.1pt]
    \end{tabular}
    }

    \caption{
        \footnotesize
        ZsRE results on \texttt{Llama3-8B}. \textbf{RLSEdit} shows the same trend beyond CounterFact.
    }
    \label{tab:zsre_results_horizontal}
\end{table}

\paragraph{ZsRE Results.}
We further test \textbf{RLSEdit} on ZsRE \citep{levy-etal-2017-zero}. As shown in Table \ref{tab:zsre_results_horizontal}, \textbf{RLSEdit} achieves 81.20 Efficacy on \texttt{Llama3-8B}, outperforming \textsc{AlphaEdit}, \textsc{MEMIT}, and \textsc{ROME}. This supports the generality of the empirical trend across different editing datasets.
\paragraph{General Capability Results.}  
To assess how well editing methods preserve the pre-edited model's general abilities, we evaluate five tasks from \textsc{GLUE}~\citep{wang-etal-2018-glue}, together with additional benchmarks for \emph{general knowledge} (MMLU), \emph{math reasoning} (GSM8K), and \emph{coding ability} (HumanEval, MBPP). We conduct the evaluation on multiple editing checkpoints of the \texttt{Llama3-8B} model, using 10\textsf{K} total edits with a batch size of 100.

Figure
\ref{fig:general_results} summarizes the general capability evaluations. Across the evaluated language understanding, knowledge, math, and coding benchmarks, \textbf{RLSEdit} better preserves the model's general capabilities throughout the editing trajectory. This is especially important in the long sequential setting, where many edits may accumulate interference over time. In contrast, \textsc{MEMIT}, \textsc{ROME}, and \textsc{FT} show stronger degradation as edits accumulate. \textsc{AlphaEdit} performs competitively in the early stages, but drops more noticeably after a large number of edits. These results suggest that \textbf{RLSEdit} provides a better balance between edit reliability and general capability preservation.

\subsection{Analysis and Discussion}

\begin{table}[ht]
    \centering
    \small
    \setlength{\tabcolsep}{4pt}
    \renewcommand{\arraystretch}{1.05}
    \begin{tabular}{lcccccc}
        \toprule[1.2pt]
        \multirow{2}{*}{\centering\textbf{Method}} &
        \multicolumn{3}{c}{\textbf{\texttt{Llama3-8B}}} &
        \multicolumn{3}{c}{\textbf{\texttt{Qwen2.5-7B}}} \\
        \cmidrule(lr){2-4}\cmidrule(lr){5-7}
         & 100 & 200 & 500 & 100 & 200 & 500 \\
        \midrule
        AlphaEdit & 525.15 & 227.93 & 108.07 & 978.32 & 412.94 & 197.49 \\
        \textbf{RLSEdit} & \textbf{328.39} & \textbf{166.84} & \textbf{66.85} & \textbf{545.65} & \textbf{271.20} & \textbf{112.88} \\
        \bottomrule[1.2pt]
    \end{tabular}
    \caption{
        Update time (seconds) for performing 10K edits on \texttt{Llama3-8B} and \texttt{Qwen2.5-7B} using batch sizes \{100, 200, 500\}. Lower values indicate faster updates. We compare \textbf{RLSEdit} with \textsc{AlphaEdit}.
    }
    \label{tab:efficiency_update_time}
\end{table}

\paragraph{Early Edits Comparison.}
To examine how well \textbf{RLSEdit} and the baselines preserve earlier edits in a sequential editing setting, we re-evaluate the first \(N\) edited cases, where \(N \in \{500, 1\mathsf{K}, 2\mathsf{K}, 4\mathsf{K}\}\), after performing \(10\mathsf{K}\) sequential edits with batch size 100 on \texttt{Llama3-8B}. As shown in Figure \ref{fig:early_edits}, \textbf{RLSEdit} consistently achieves the best retention across all \(N\): its Rewrite scores range from \(71.22\) at \(N{=}500\) to \(81.28\) at \(N{=}4\mathsf{K}\), and its Paraphrase scores range from \(60.49\) to \(66.98\). In contrast, the baselines remain noticeably lower, suggesting weaker preservation of previously edited knowledge under long sequential editing.

\paragraph{Effect of Regularization.}
The two regularization parameters play different roles. The parameter $\lambda$ controls the drift from the original weights $\mW_0$, while $\mu$ controls the anchor mapping $\mK_0\mW$. We find that the effect of $\lambda$ is model-dependent. On \texttt{Llama3-8B}, setting $\lambda=0$ while keeping $\mu=12000$ reduces the Efficacy score to 87.0, with the other metrics also slightly lower. This suggests that the parameter-drift penalty improves the editing--preservation trade-off for this backbone. In contrast, \texttt{Qwen2.5-7B} performs best with $\lambda=0$, indicating that this model is more robust to direct weight changes.

The anchor parameter $\mu$ is important in practice. Removing it can make the linear system ill-conditioned in long sequential editing. For a new backbone, we therefore tune $(\lambda,\mu)$ using a small pilot sweep and monitor the three objective terms: edit fitting, parameter drift, and anchor-mapping error.

\paragraph{Speed-up Analysis.}
Table \ref{tab:efficiency_update_time} reports the update computation time for \textbf{RLSEdit} and \textsc{AlphaEdit} across two model backbones and three batch sizes. Across all six configurations, \textbf{RLSEdit} runs faster, reducing update time by $1.37\times$--$1.79\times$ relative to \textsc{AlphaEdit}. This is consistent with the theoretical time-complexity analysis in Section \ref{subsec:complexity}.
\section{Conclusion}
Existing model editing methods often degrade when the number of edits becomes large. We propose \textbf{RLSEdit}, a recursive least-squares framework for long sequential editing with soft editing and soft preservation constraints. RLSEdit introduces two regularization terms to control deviation from the pre-trained weights and from an anchor mapping, allowing it to balance edit adherence and preservation. The resulting objective admits an efficient Woodbury-based recursion whose per-edit update cost is independent of the edit history length.

Empirically, RLSEdit scales stably to $10\mathsf{K}$ edits on \texttt{Llama-3} and \texttt{Qwen2.5}, achieves strong results on CounterFact and ZsRE, and better preserves general capabilities across language understanding, knowledge, math, and coding benchmarks. These results support RLSEdit as a practical approach for continuous model editing. A remaining limitation is that our experiments focus on 7B--8B backbones; evaluating larger models and more failure cases is an important direction for future work.

\clearpage

\section*{Ethical Statement}

While our work centers on model-editing methods, it is important to acknowledge that such techniques can also be misused to inject undesirable knowledge or behavioral traits into a model. These risks merit careful consideration and discussion.


\bibliographystyle{abbrvnat}
\bibliography{ijcai26}

@inproceedings{jiang2025anyedit,
  author       = {Houcheng Jiang and
                  Junfeng Fang and
                  Ningyu Zhang and
                  Mingyang Wan and
                  Guojun Ma and
                  Xiang Wang and
                  Xiangnan He and
                  Tat{-}Seng Chua},
  title        = {AnyEdit: Edit Any Knowledge Encoded in Language Models},
  booktitle    = {Forty-second International Conference on Machine Learning, {ICML}
                  2025, Vancouver, BC, Canada, July 13-19, 2025},
  publisher    = {OpenReview.net},
  year         = {2025},
  bibsource    = {dblp computer science bibliography, https://dblp.org}
}

@inproceedings{fang2410alphaedit,
  author       = {Junfeng Fang and
                  Houcheng Jiang and
                  Kun Wang and
                  Yunshan Ma and
                  Jie Shi and
                  Xiang Wang and
                  Xiangnan He and
                  Tat{-}Seng Chua},
  title        = {AlphaEdit: Null-Space Constrained Knowledge Editing for Language Models},
  booktitle    = {The Thirteenth International Conference on Learning Representations,
                  {ICLR} 2025, Singapore, April 24-28, 2025},
  publisher    = {OpenReview.net},
  year         = {2025},
  bibsource    = {dblp computer science bibliography, https://dblp.org}
}

@inproceedings{meng2022locating,
  author       = {Kevin Meng and
                  David Bau and
                  Alex Andonian and
                  Yonatan Belinkov},
  editor       = {Sanmi Koyejo and
                  S. Mohamed and
                  A. Agarwal and
                  Danielle Belgrave and
                  K. Cho and
                  A. Oh},
  title        = {Locating and Editing Factual Associations in {GPT}},
  booktitle    = {Advances in Neural Information Processing Systems 35: Annual Conference
                  on Neural Information Processing Systems 2022, NeurIPS 2022, New Orleans,
                  LA, USA, November 28 - December 9, 2022},
  year         = {2022},
  bibsource    = {dblp computer science bibliography, https://dblp.org}
}

@inproceedings{meng2022mass,
  author       = {Kevin Meng and
                  Arnab Sen Sharma and
                  Alex J. Andonian and
                  Yonatan Belinkov and
                  David Bau},
  title        = {Mass-Editing Memory in a Transformer},
  booktitle    = {The Eleventh International Conference on Learning Representations,
                  {ICLR} 2023, Kigali, Rwanda, May 1-5, 2023},
  publisher    = {OpenReview.net},
  year         = {2023},
  bibsource    = {dblp computer science bibliography, https://dblp.org}
}

@inproceedings{chen2024lifelong,
  author       = {Qizhou Chen and
                  Taolin Zhang and
                  Xiaofeng He and
                  Dongyang Li and
                  Chengyu Wang and
                  Longtao Huang and
                  Hui Xue'},
  editor       = {Yaser Al{-}Onaizan and
                  Mohit Bansal and
                  Yun{-}Nung Chen},
  title        = {Lifelong Knowledge Editing for LLMs with Retrieval-Augmented Continuous
                  Prompt Learning},
  booktitle    = {Proceedings of the 2024 Conference on Empirical Methods in Natural
                  Language Processing, {EMNLP} 2024, Miami, FL, USA, November 12-16,
                  2024},
  pages        = {13565--13580},
  publisher    = {Association for Computational Linguistics},
  year         = {2024},
  bibsource    = {dblp computer science bibliography, https://dblp.org}
}

@inproceedings{deng2024unke,
  author       = {Jingcheng Deng and
                  Zihao Wei and
                  Liang Pang and
                  Hanxing Ding and
                  Huawei Shen and
                  Xueqi Cheng},
  title        = {Everything is Editable: Extend Knowledge Editing to Unstructured Data
                  in Large Language Models},
  booktitle    = {The Thirteenth International Conference on Learning Representations,
                  {ICLR} 2025, Singapore, April 24-28, 2025},
  publisher    = {OpenReview.net},
  year         = {2025},
  url          = {https://openreview.net/forum?id=X5rO5VyTgB},
  timestamp    = {Mon, 13 Oct 2025 17:33:51 +0200},
  biburl       = {https://dblp.org/rec/conf/iclr/DengWPDSC25.bib},
  bibsource    = {dblp computer science bibliography, https://dblp.org}
}

@inproceedings{wang2024wise,
  author       = {Peng Wang and
                  Zexi Li and
                  Ningyu Zhang and
                  Ziwen Xu and
                  Yunzhi Yao and
                  Yong Jiang and
                  Pengjun Xie and
                  Fei Huang and
                  Huajun Chen},
  editor       = {Amir Globersons and
                  Lester Mackey and
                  Danielle Belgrave and
                  Angela Fan and
                  Ulrich Paquet and
                  Jakub M. Tomczak and
                  Cheng Zhang},
  title        = {{WISE:} Rethinking the Knowledge Memory for Lifelong Model Editing
                  of Large Language Models},
  booktitle    = {Advances in Neural Information Processing Systems 38: Annual Conference
                  on Neural Information Processing Systems 2024, NeurIPS 2024, Vancouver,
                  BC, Canada, December 10 - 15, 2024},
  year         = {2024},
  bibsource    = {dblp computer science bibliography, https://dblp.org}
}

@inproceedings{sun-etal-2025-mitigating-negative,
  title     = {Mitigating Negative Interference in Multilingual Knowledge Editing through Null-Space Constraints},
  author    = {Sun, Wei and Qu, Tingyu and Li, Mingxiao and Davis, Jesse and Moens, Marie-Francine},
  editor    = {Che, Wanxiang and Nabende, Joyce and Shutova, Ekaterina and Pilehvar, Mohammad Taher},
  booktitle = {Findings of the Association for Computational Linguistics: ACL 2025},
  month     = jul,
  year      = {2025},
  address   = {Vienna, Austria},
  publisher = {Association for Computational Linguistics},
  url       = {https://aclanthology.org/2025.findings-acl.460/},
  doi       = {10.18653/v1/2025.findings-acl.460},
  pages     = {8796--8810},
  isbn      = {979-8-89176-256-5}
}

@inproceedings{wang-etal-2018-glue,
    title = "{GLUE}: A Multi-Task Benchmark and Analysis Platform for Natural Language Understanding",
    author = "Wang, Alex  and
      Singh, Amanpreet  and
      Michael, Julian  and
      Hill, Felix  and
      Levy, Omer  and
      Bowman, Samuel",
    editor = "Linzen, Tal  and
      Chrupa{\l}a, Grzegorz  and
      Alishahi, Afra",
    booktitle = "Proceedings of the 2018 {EMNLP} Workshop {B}lackbox{NLP}: Analyzing and Interpreting Neural Networks for {NLP}",
    month = nov,
    year = "2018",
    address = "Brussels, Belgium",
    publisher = "Association for Computational Linguistics",
    url = "https://aclanthology.org/W18-5446/",
    doi = "10.18653/v1/W18-5446",
    pages = "353--355",
    abstract = "Human ability to understand language is \textit{general, flexible, and robust}. In contrast, most NLU models above the word level are designed for a specific task and struggle with out-of-domain data. If we aspire to develop models with understanding beyond the detection of superficial correspondences between inputs and outputs, then it is critical to develop a unified model that can execute a range of linguistic tasks across different domains. To facilitate research in this direction, we present the General Language Understanding Evaluation (GLUE, gluebenchmark.com): a benchmark of nine diverse NLU tasks, an auxiliary dataset for probing models for understanding of specific linguistic phenomena, and an online platform for evaluating and comparing models. For some benchmark tasks, training data is plentiful, but for others it is limited or does not match the genre of the test set. GLUE thus favors models that can represent linguistic knowledge in a way that facilitates sample-efficient learning and effective knowledge-transfer across tasks. While none of the datasets in GLUE were created from scratch for the benchmark, four of them feature privately-held test data, which is used to ensure that the benchmark is used fairly. We evaluate baselines that use ELMo (Peters et al., 2018), a powerful transfer learning technique, as well as state-of-the-art sentence representation models. The best models still achieve fairly low absolute scores. Analysis with our diagnostic dataset yields similarly weak performance over all phenomena tested, with some exceptions."
}

@inproceedings{de-cao-etal-2021-editing,
  title     = {Editing Factual Knowledge in Language Models},
  author    = {De Cao, Nicola and Aziz, Wilker and Titov, Ivan},
  booktitle = {Proceedings of the 2021 Conference on Empirical Methods in Natural Language Processing},
  year      = {2021},
  pages     = {6491--6506},
  publisher = {Association for Computational Linguistics},
  url       = {https://aclanthology.org/2021.emnlp-main.522/},
  doi       = {10.18653/v1/2021.emnlp-main.522}
}

@inproceedings{mitchell2022fast,
  title     = {Fast Model Editing at Scale},
  author    = {Mitchell, Eric and Lin, Charles and Bosselut, Antoine and Finn, Chelsea and Manning, Christopher D.},
  booktitle = {International Conference on Learning Representations (ICLR)},
  year      = {2022},
  url       = {https://openreview.net/forum?id=0DcZxeWfOPt}
}

@inproceedings{DBLP:conf/nips/HartvigsenSPKG23,
  author    = {Tom Hartvigsen and Swami Sankaranarayanan and Hamid Palangi and Yoon Kim and Marzyeh Ghassemi},
  title     = {Aging with {GRACE:} Lifelong Model Editing with Discrete Key-Value Adaptors},
  booktitle = {Advances in Neural Information Processing Systems 36 (NeurIPS 2023)},
  year      = {2023},
  url       = {http://papers.nips.cc/paper_files/paper/2023/hash/95b6e2ff961580e03c0a662a63a71812-Abstract-Conference.html}
}

@inproceedings{gupta-etal-2024-model,
  title     = {Model Editing at Scale leads to Gradual and Catastrophic Forgetting},
  author    = {Gupta, Akshat and Rao, Anurag and Anumanchipalli, Gopala},
  booktitle = {Findings of the Association for Computational Linguistics: ACL 2024},
  year      = {2024},
  pages     = {15202--15232},
  publisher = {Association for Computational Linguistics},
  address   = {Bangkok, Thailand},
  month     = aug,
  url       = {https://aclanthology.org/2024.findings-acl.902/},
  doi       = {10.18653/v1/2024.findings-acl.902}
}

@book{sayed2003fundamentals,
  title     = {Fundamentals of Adaptive Filtering},
  author    = {Sayed, Ali H.},
  publisher = {Wiley-IEEE Press},
  year      = {2003},
  isbn      = {978-0471461265}
}

@article{sherman1950adjustment,
  title   = {Adjustment of an Inverse Matrix Corresponding to a Change in One Element of a Given Matrix},
  author  = {Sherman, Jack and Morrison, Winifred J.},
  journal = {The Annals of Mathematical Statistics},
  volume  = {21},
  number  = {1},
  pages   = {124--127},
  year    = {1950},
  doi     = {10.1214/aoms/1177729893}
}

@techreport{woodbury1950inverting,
  title       = {Inverting Modified Matrices},
  author      = {Woodbury, Max A.},
  institution = {Statistical Research Group, Princeton University},
  type        = {Memorandum Report},
  number      = {42},
  year        = {1950}
}

@article{hager1989updating,
  title   = {Updating the Inverse of a Matrix},
  author  = {Hager, William W.},
  journal = {SIAM Review},
  volume  = {31},
  number  = {2},
  pages   = {221--239},
  year    = {1989},
  doi     = {10.1137/1031049}
}

@misc{gpt-4,
  author       = {OpenAI},
  title        = {{GPT-4} Technical Report},
  journal      = {CoRR},
  volume       = {abs/2303.08774},
  year         = {2023},
  url          = {https://doi.org/10.48550/arXiv.2303.08774},
  doi          = {10.48550/ARXIV.2303.08774},
  eprinttype    = {arXiv},
  eprint       = {2303.08774},
  timestamp    = {Mon, 28 Aug 2023 21:26:19 +0200},
  biburl       = {https://dblp.org/rec/journals/corr/abs-2303-08774.bib},
  bibsource    = {dblp computer science bibliography, https://dblp.org}
}

@misc{llama3,
  author       = {Abhimanyu Dubey and
                  others},
  title        = {The Llama 3 Herd of Models},
  journal      = {CoRR},
  volume       = {abs/2407.21783},
  year         = {2024},
  url          = {https://doi.org/10.48550/arXiv.2407.21783},
  doi          = {10.48550/ARXIV.2407.21783},
  eprinttype    = {arXiv},
  eprint       = {2407.21783},
  timestamp    = {Tue, 08 Jul 2025 07:36:33 +0200},
  biburl       = {https://dblp.org/rec/journals/corr/abs-2407-21783.bib},
  bibsource    = {dblp computer science bibliography, https://dblp.org}
}

@misc{deepseek-r1,
  author       = {DeepSeek{-}AI and
                  others},
  title        = {DeepSeek-R1: Incentivizing Reasoning Capability in LLMs via Reinforcement
                  Learning},
  journal      = {CoRR},
  volume       = {abs/2501.12948},
  year         = {2025},
  url          = {https://doi.org/10.48550/arXiv.2501.12948},
  doi          = {10.48550/ARXIV.2501.12948},
  eprinttype    = {arXiv},
  eprint       = {2501.12948},
  timestamp    = {Wed, 06 Aug 2025 13:38:20 +0200},
  biburl       = {https://dblp.org/rec/journals/corr/abs-2501-12948.bib},
  bibsource    = {dblp computer science bibliography, https://dblp.org}
}

@misc{qwen-2.5,
  author       = {An Yang and
                 others},
  title        = {Qwen2.5 Technical Report},
  journal      = {CoRR},
  volume       = {abs/2412.15115},
  year         = {2024},
  url          = {https://doi.org/10.48550/arXiv.2412.15115},
  doi          = {10.48550/ARXIV.2412.15115},
  eprinttype    = {arXiv},
  eprint       = {2412.15115},
  timestamp    = {Wed, 11 Jun 2025 16:18:30 +0200},
  biburl       = {https://dblp.org/rec/journals/corr/abs-2412-15115.bib},
  bibsource    = {dblp computer science bibliography, https://dblp.org}
}

@article{Cobbe2021TrainingVT,
  title={Training Verifiers to Solve Math Word Problems},
  author={Karl Cobbe and Vineet Kosaraju and Mo Bavarian and Mark Chen and Heewoo Jun and Lukasz Kaiser and Matthias Plappert and Jerry Tworek and Jacob Hilton and Reiichiro Nakano and Christopher Hesse and John Schulman},
  journal={ArXiv},
  year={2021},
  volume={abs/2110.14168},
  url={https://api.semanticscholar.org/CorpusID:239998651}
}

@article{Chen2021EvaluatingLL,
  title={Evaluating Large Language Models Trained on Code},
  author={Mark Chen and Jerry Tworek and Heewoo Jun and Qiming Yuan and Henrique Pond{\'e} and Jared Kaplan and Harrison Edwards and Yura Burda and Nicholas Joseph and Greg Brockman and Alex Ray and Raul Puri and Gretchen Krueger and Michael Petrov and Heidy Khlaaf and Girish Sastry and Pamela Mishkin and Brooke Chan and Scott Gray and Nick Ryder and Mikhail Pavlov and Alethea Power and Lukasz Kaiser and Mo Bavarian and Clemens Winter and Phil Tillet and Felipe Petroski Such and David W. Cummings and Matthias Plappert and Fotios Chantzis and Elizabeth Barnes and Ariel Herbert-Voss and William H. Guss and Alex Nichol and Igor Babuschkin and Suchir Balaji and Shantanu Jain and Andrew Carr and Jan Leike and Josh Achiam and Vedant Misra and Evan Morikawa and Alec Radford and Matthew M. Knight and Miles Brundage and Mira Murati and Katie Mayer and Peter Welinder and Bob McGrew and Dario Amodei and Sam McCandlish and Ilya Sutskever and Wojciech Zaremba},
  journal={ArXiv},
  year={2021},
  volume={abs/2107.03374},
  url={https://api.semanticscholar.org/CorpusID:235755472}
}

@article{Austin2021ProgramSW,
  title={Program Synthesis with Large Language Models},
  author={Jacob Austin and Augustus Odena and Maxwell Nye and Maarten Bosma and Henryk Michalewski and David Dohan and Ellen Jiang and Carrie J. Cai and Michael Terry and Quoc V. Le and Charles Sutton},
  journal={ArXiv},
  year={2021},
  volume={abs/2108.07732},
  url={https://api.semanticscholar.org/CorpusID:237142385}
}

@inproceedings{socher-etal-2013-recursive,
    title = "Recursive Deep Models for Semantic Compositionality Over a Sentiment Treebank",
    author = "Socher, Richard  and
      Perelygin, Alex  and
      Wu, Jean  and
      Chuang, Jason  and
      Manning, Christopher D.  and
      Ng, Andrew  and
      Potts, Christopher",
    editor = "Yarowsky, David  and
      Baldwin, Timothy  and
      Korhonen, Anna  and
      Livescu, Karen  and
      Bethard, Steven",
    booktitle = "Proceedings of the 2013 Conference on Empirical Methods in Natural Language Processing",
    month = oct,
    year = "2013",
    address = "Seattle, Washington, USA",
    publisher = "Association for Computational Linguistics",
    url = "https://aclanthology.org/D13-1170/",
    pages = "1631--1642"
}

@inproceedings{dolan-brockett-2005-automatically,
    title = "Automatically Constructing a Corpus of Sentential Paraphrases",
    author = "Dolan, William B.  and
      Brockett, Chris",
    booktitle = "Proceedings of the Third International Workshop on Paraphrasing ({IWP}2005)",
    year = "2005",
    url = "https://aclanthology.org/I05-5002/"
}

@article{Hendrycks2020MeasuringMM,
  title={Measuring Massive Multitask Language Understanding},
  author={Dan Hendrycks and Collin Burns and Steven Basart and Andy Zou and Mantas Mazeika and Dawn Xiaodong Song and Jacob Steinhardt},
  journal={ArXiv},
  year={2020},
  volume={abs/2009.03300},
  url={https://api.semanticscholar.org/CorpusID:221516475}
}

@inproceedings{DBLP:conf/tac/BentivogliMDDG09,
  author       = {Luisa Bentivogli and
                  Bernardo Magnini and
                  Ido Dagan and
                  Hoa Trang Dang and
                  Danilo Giampiccolo},
  title        = {The Fifth {PASCAL} Recognizing Textual Entailment Challenge},
  booktitle    = {{TAC}},
  publisher    = {{NIST}},
  year         = {2009}
}

@article{warstadt-etal-2019-neural,
    title = "Neural Network Acceptability Judgments",
    author = "Warstadt, Alex  and
      Singh, Amanpreet  and
      Bowman, Samuel R.",
    editor = "Lee, Lillian  and
      Johnson, Mark  and
      Roark, Brian  and
      Nenkova, Ani",
    journal = "Transactions of the Association for Computational Linguistics",
    volume = "7",
    year = "2019",
    address = "Cambridge, MA",
    publisher = "MIT Press",
    url = "https://aclanthology.org/Q19-1040/",
    doi = "10.1162/tacl_a_00290",
    pages = "625--641",
    abstract = "This paper investigates the ability of artificial neural networks to judge the grammatical acceptability of a sentence, with the goal of testing their linguistic competence. We introduce the Corpus of Linguistic Acceptability (CoLA), a set of 10,657 English sentences labeled as grammatical or ungrammatical from published linguistics literature. As baselines, we train several recurrent neural network models on acceptability classification, and find that our models outperform unsupervised models by Lau et al. (2016) on CoLA. Error-analysis on specific grammatical phenomena reveals that both Lau et al.{'}s models and ours learn systematic generalizations like subject-verb-object order. However, all models we test perform far below human level on a wide range of grammatical constructions."
}

@article{Zhu2020ModifyingMI,
  title={Modifying Memories in Transformer Models},
  author={Chen Zhu and Ankit Singh Rawat and Manzil Zaheer and Srinadh Bhojanapalli and Daliang Li and Felix X. Yu and Sanjiv Kumar},
  journal={ArXiv},
  year={2020},
  volume={abs/2012.00363},
  url={https://api.semanticscholar.org/CorpusID:227238659}
}

@inproceedings{williams-etal-2018-broad,
    title = "A Broad-Coverage Challenge Corpus for Sentence Understanding through Inference",
    author = "Williams, Adina  and
      Nangia, Nikita  and
      Bowman, Samuel",
    editor = "Walker, Marilyn  and
      Ji, Heng  and
      Stent, Amanda",
    booktitle = "Proceedings of the 2018 Conference of the North {A}merican Chapter of the Association for Computational Linguistics: Human Language Technologies, Volume 1 (Long Papers)",
    month = jun,
    year = "2018",
    address = "New Orleans, Louisiana",
    publisher = "Association for Computational Linguistics",
    url = "https://aclanthology.org/N18-1101/",
    doi = "10.18653/v1/N18-1101",
    pages = "1112--1122",
    abstract = "This paper introduces the Multi-Genre Natural Language Inference (MultiNLI) corpus, a dataset designed for use in the development and evaluation of machine learning models for sentence understanding. At 433k examples, this resource is one of the largest corpora available for natural language inference (a.k.a. recognizing textual entailment), improving upon available resources in both its coverage and difficulty. MultiNLI accomplishes this by offering data from ten distinct genres of written and spoken English, making it possible to evaluate systems on nearly the full complexity of the language, while supplying an explicit setting for evaluating cross-genre domain adaptation. In addition, an evaluation using existing machine learning models designed for the Stanford NLI corpus shows that it represents a substantially more difficult task than does that corpus, despite the two showing similar levels of inter-annotator agreement."
}

@inproceedings{NEURIPS2018_f31b2046,
 author = {Ritter, Hippolyt and Botev, Aleksandar and Barber, David},
 booktitle = {Advances in Neural Information Processing Systems},
 editor = {S. Bengio and H. Wallach and H. Larochelle and K. Grauman and N. Cesa-Bianchi and R. Garnett},
 pages = {},
 publisher = {Curran Associates, Inc.},
 title = {Online Structured Laplace Approximations for Overcoming Catastrophic Forgetting},
 url = {https://proceedings.neurips.cc/paper_files/paper/2018/file/f31b20466ae89669f9741e047487eb37-Paper.pdf},
 volume = {31},
 year = {2018}
}

@misc{lyu2026evoeditevolvingnullspacealignment,
      title={EvoEdit: Evolving Null-space Alignment for Robust and Efficient Knowledge Editing}, 
      author={Sicheng Lyu and Yu Gu and Xinyu Wang and Jerry Huang and Sitao Luan and Yufei Cui and Xiao-Wen Chang and Peng Lu},
      year={2026},
      eprint={2510.13851},
      archivePrefix={arXiv},
      primaryClass={cs.CL},
      url={https://arxiv.org/abs/2510.13851}, 
}

@inproceedings{levy-etal-2017-zero,
    title = "Zero-Shot Relation Extraction via Reading Comprehension",
    author = "Levy, Omer  and
      Seo, Minjoon  and
      Choi, Eunsol  and
      Zettlemoyer, Luke",
    editor = "Levy, Roger  and
      Specia, Lucia",
    booktitle = "Proceedings of the 21st Conference on Computational Natural Language Learning ({C}o{NLL} 2017)",
    month = aug,
    year = "2017",
    address = "Vancouver, Canada",
    publisher = "Association for Computational Linguistics",
    url = "https://aclanthology.org/K17-1034/",
    doi = "10.18653/v1/K17-1034",
    pages = "333--342",
    abstract = "We show that relation extraction can be reduced to answering simple reading comprehension questions, by associating one or more natural-language questions with each relation slot. This reduction has several advantages: we can (1) learn relation-extraction models by extending recent neural reading-comprehension techniques, (2) build very large training sets for those models by combining relation-specific crowd-sourced questions with distant supervision, and even (3) do zero-shot learning by extracting new relation types that are only specified at test-time, for which we have no labeled training examples. Experiments on a Wikipedia slot-filling task demonstrate that the approach can generalize to new questions for known relation types with high accuracy, and that zero-shot generalization to unseen relation types is possible, at lower accuracy levels, setting the bar for future work on this task."
}

@misc{wang2025resona,
  title         = {Resona: Improving Context Copying in Linear Recurrence Models with Retrieval},
  author        = {Wang, Xinyu and Ma, Linrui and Huang, Jerry and Lu, Peng and Parthasarathi, Prasanna and Chang, Xiao-Wen and Chen, Boxing and Cui, Yufei},
  year          = {2025},
  eprint        = {2503.22913},
  archivePrefix = {arXiv},
  primaryClass  = {cs.CL},
  doi           = {10.48550/arXiv.2503.22913}
}

@article{kirkpatrick2017overcoming,
  title={Overcoming catastrophic forgetting in neural networks},
  author={Kirkpatrick, James and Pascanu, Razvan and Rabinowitz, Neil and Veness, Joel and Desjardins, Guillaume and Rusu, Andrei A and Milan, Kieran and Quan, John and Ramalho, Tiago and Grabska-Barwinska, Agnieszka and others},
  journal={Proceedings of the national academy of sciences},
  volume={114},
  number={13},
  pages={3521--3526},
  year={2017},
  publisher={National Academy of Sciences}
}

\clearpage

\appendix

\section{Preliminaries.}
Recall the stacked least-squares form
\begin{align}
\mA_t=\begin{bmatrix}\lambda I\\[1pt]\mu \mK_0\\[1pt]\mK_1\\\vdots\\\mK_t\end{bmatrix},
\mB_t=\begin{bmatrix}\lambda \mW_0\\[1pt]\mu \mV_0\\[1pt]\mV_1\\\vdots\\\mV_t\end{bmatrix},
\\
\mW_t^*=\arg\min_\mW \|\mA_t\mW-\mB_t\|_F^2, \label{eq:app-prelim-stack}
\end{align}
and define the normal-equation matrices
\begin{align}
\mS_t&\coloneq\mA_t^\top \mA_t
=\lambda^2 I+\mu^2 \mK_0^\top \mK_0+\sum_{i=1}^t \mK_i^\top \mK_i, \label{eq:app-prelim-St}\\
\mT_t&\coloneq\mA_t^\top \mB_t
=\lambda^2 \mW_0+\mu^2 \mK_0^\top \mV_0+\sum_{i=1}^t \mK_i^\top \mV_i. \label{eq:app-prelim-Tt}
\end{align}
Whenever $\mS_t\succ0$, the minimizer is unique and satisfies
\begin{align}
\mW_t^*&=\mS_t^{-1}\mT_t,
&
\mC_t&\coloneq\mS_t^{-1}. \label{eq:app-closed}
\end{align}
We will use the one-step identity (a standard RLS consequence of stacking and normal equations)
\begin{align}
\mW_t^*-\mW_{t-1}^*
&= \mC_t\,\mK_t^\top\,\mR_t,
&
\mR_t&\coloneq\mV_t-\mK_t\mW_{t-1}^*. \label{eq:app-rls-step}
\end{align}
Finally, we assume the anchor is satisfied by the initializer:
\begin{align}
\mK_0\mW_0=\mV_0. \label{eq:app-anchor}
\end{align}

\subsubsection*{Alternative: streaming QR update}
\label{subsubsec:qr}

For improved numerical stability, one may maintain a QR factorization of $\mA_t$.
Assume that at time $t-1$ we have orthogonal transforms
\begin{equation}
\label{eq:OT_main}
\mQ_{t-1}^\top \mA_{t-1} = \begin{bmatrix} \mR_{t-1}\\\bm{0}\end{bmatrix},
\qquad
\mQ_{t-1}^\top \mB_{t-1} = \begin{bmatrix} \bar \mB_{t-1}\\ \tilde \mB_{t-1}\end{bmatrix},
\end{equation}
where $\mR_{t-1}\in\mathbb{R}^{d_K\times d_K}$ is upper triangular.
At time $t$, we apply additional orthogonal transforms $\bar \mQ_t$ to
\begin{equation}
\label{eq:QR_stream_main}
\bar \mQ_t^\top
\begin{bmatrix} \mR_{t-1}\\ \mK_t\end{bmatrix}
=
\begin{bmatrix} \mR_t\\ \bm{0}\end{bmatrix},
\qquad
\bar \mQ_t^\top
\begin{bmatrix} \bar \mB_{t-1}\\ \mV_t\end{bmatrix}
=
\begin{bmatrix} \bar \mB_t\\ \hat \mB_t\end{bmatrix}.
\end{equation}
Then $\mW_t^*$ is obtained by solving the triangular system
\begin{equation}
\label{eq:tri_main}
\mR_t\mW_t^* = \bar\mB_t.
\end{equation}

\paragraph{Initialization.}
Since $\mR_0^\top \mR_0 = \lambda^2\mI + \mu^2 \mK_0^\top \mK_0$,
we compute $\mR_0$ via Cholesky and set $\bar \mB_0 = \mR_0\mW_0$
(using $\mK_0\mW_0=\mV_0$).

\subsection*{A. Proof of Theorem~\ref{prop:global_deviation}}

\begin{proof}[Proof of Theorem~\ref{prop:global_deviation}(i) (parameter deviation)]
The normal equations for $\mW_T^*$ are
\begin{align}
\mS_T \mW_T^*&=\mT_T, \label{eq:app-ne-T}
\end{align}
where
\begin{align}
\mS_T
&=\lambda^2 \mI+\mu^2 \mK_0^\top \mK_0+\sum_{i=1}^T \mK_i^\top \mK_i,
\label{eq:app-ne-ST}\\
\mT_T
&=\lambda^2 \mW_0+\mu^2 \mK_0^\top V_0+\sum_{i=1}^T \mK_i^\top \mV_i.
\label{eq:app-ne-TT}
\end{align}
Using the anchor condition \eqref{eq:app-anchor}, we have
\begin{align}
\mK_0^\top \mV_0
&=\mK_0^\top \mK_0\mW_0. \label{eq:app-K0tV0}
\end{align}
Subtracting $\mS_T\mW_0$ from both sides of \eqref{eq:app-ne-T} gives
\begin{align}
\mS_T(\mW_T^*-\mW_0)
&=
\mT_T-\mS_T\mW_0
=\sum_{i=1}^T \mK_i^\top(\mV_i-\mK_i\mW_0).
\label{eq:app-param-core}
\end{align}
Thus
\begin{align}
\mW_T^*-\mW_0
&=\mS_T^{-1}\sum_{i=1}^T \mK_i^\top(\mV_i-\mK_i\mW_0).
\label{eq:app-param-solved}
\end{align}
By submultiplicativity and $\mS_T\succeq \lambda^2 I$ (when $\lambda>0$),
\begin{align}
\|\mW_T^*-\mW_0\|_F
&\le
\|\mS_T^{-1}\|_2 \left\|\sum_{i=1}^T \mK_i^\top(\mV_i-\mK_i\mW_0)\right\|_F \nonumber\\
&\le
\frac{1}{\lambda^2}
\left\|\sum_{i=1}^T \mK_i^\top(\mV_i-\mK_i\mW_0)\right\|_F,
\label{eq:app-param-final}
\end{align}
which proves the claim.
\end{proof}

\begin{lemma}\label{lem:app-one-step}
Assume $\mu>0$ and $\mS_t\succ0$. Then for each $t\ge1$,
\begin{align}
\|\mK_0(\mW_t^*-\mW_{t-1}^*)\|_F
&\le \frac{1}{\mu}\,\|\mR_t\|_F, \label{eq:app-tight-step}\\
\|\mK_0(\mW_t^*-\mW_{t-1}^*)\|_F
&\le \|\mK_0\|_2\,\|\mK_t\|_2\,\|\mC_t\|_2\,\|\mR_t\|_F. \label{eq:app-classic-step}
\end{align}
Moreover,
\begin{align}
\|\mC_t\|_2
\le
\frac{1}{\lambda^2+\mu^2\bm{\Sigma}_{\min}^2(\mK_0)+\sum_{i=1}^{t}\bm{\Sigma}_{\min}^2(\mK_i)}.
\label{eq:app-Ct-floor}
\end{align}
\end{lemma}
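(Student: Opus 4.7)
The plan is to prove the three bounds by three short arguments, each leveraging the quadratic structure $J_{t-1}(\mW)=\|\mA_{t-1}\mW-\mB_{t-1}\|_F^2$ and the normal equations $\mS_t\mW_t^*=\mT_t$.

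For the tight step bound \eqref{eq:app-tight-step}, I would argue variationally rather than via the RLS identity. The key decomposition is $J_t(\mW)=J_{t-1}(\mW)+\|\mK_t\mW-\mV_t\|_F^2$. Using optimality of $\mW_t^*$ for $J_t$ and of $\mW_{t-1}^*$ for $J_{t-1}$, then rearranging, yields
\[
0 \;\le\; J_{t-1}(\mW_t^*)-J_{t-1}(\mW_{t-1}^*)\;\le\;\|\mR_t\|_F^2-\|\mK_t\mW_t^*-\mV_t\|_F^2\;\le\;\|\mR_t\|_F^2.
\]
Since $J_{t-1}$ is a matrix-valued quadratic with Hessian $2\mS_{t-1}$ and unique minimizer $\mW_{t-1}^*$, the middle quantity equals $\Tr(\Delta_t^\top \mS_{t-1}\Delta_t)$, where $\Delta_t\coloneq\mW_t^*-\mW_{t-1}^*$. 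Dropping the PSD summands of $\mS_{t-1}$ other than the anchor gives $\mS_{t-1}\succeq \mu^2\mK_0^\top\mK_0$, hence $\Tr(\Delta_t^\top \mS_{t-1}\Delta_t)\ge \mu^2\|\mK_0\Delta_t\|_F^2$. Combining these inequalities delivers $\mu^2\|\mK_0\Delta_t\|_F^2\le \|\mR_t\|_F^2$.

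For the classical bound \eqref{eq:app-classic-step}, I would simply apply the one-step RLS identity \eqref{eq:app-rls-step}, namely $\Delta_t=\mC_t\mK_t^\top\mR_t$. Chaining the mixed submultiplicativity $\|\mX\mY\|_F\le \|\mX\|_2\|\mY\|_F$ three times on the product $\mK_0\mC_t\mK_t^\top\mR_t$ gives the stated factorization with no further work.

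For the covariance floor \eqref{eq:app-Ct-floor}, I would invoke the sub-additive Weyl inequality $\lambda_{\min}(A+B)\ge \lambda_{\min}(A)+\lambda_{\min}(B)$ applied term by term to $\mS_t=\lambda^2\mI+\mu^2\mK_0^\top\mK_0+\sum_{i=1}^t \mK_i^\top\mK_i$, using $\lambda_{\min}(\mK^\top\mK)=\sigma_{\min}^2(\mK)$; inverting via $\|\mC_t\|_2=1/\lambda_{\min}(\mS_t)$ gives the claim. The main obstacle is the tight bound \eqref{eq:app-tight-step}: it is \emph{not} an immediate consequence of submultiplicativity applied to $\Delta_t=\mC_t\mK_t^\top\mR_t$, and the energy-comparison above (or, equivalently, the PSD relation $\mM\mS_t^{-1}\mM^\top\preceq \mI$ with $\mM=[\mu\mK_0;\mK_t]$ which forces $\|\mu\mK_0\mS_t^{-1}\mK_t^\top\|_2\le 1$) is exactly what closes the $1/\mu$ factor. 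The remaining two parts are mechanical.
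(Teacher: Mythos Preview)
Your proposal is correct. For the classical bound \eqref{eq:app-classic-step} and the covariance floor \eqref{eq:app-Ct-floor} you proceed exactly as the paper does (submultiplicativity on $\mK_0\mC_t\mK_t^\top\mR_t$, and Weyl's inequality on $\mS_t$ followed by $\|\mC_t\|_2=1/\lambda_{\min}(\mS_t)$).

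For the tight bound \eqref{eq:app-tight-step}, however, your route is genuinely different. The paper works directly with the RLS identity $\Delta_t=\mC_t\mK_t^\top\mR_t$ and bounds $\|\mK_0\mC_t\mK_t^\top\|_2$ by a spectral factorization: using Sherman--Morrison--Woodbury it shows $\|\mK_t\mC_t\mK_t^\top\|_2\le 1$ (eigenvalues of $\mH_t(I+\mH_t)^{-1}$ lie in $[0,1)$), and from $\mS_t\succeq\mu^2\mK_0^\top\mK_0$ it extracts $\|\mK_0\mC_t\mK_0^\top\|_2\le 1/\mu^2$; multiplying the two factors gives the $1/\mu$. Your variational argument avoids Woodbury entirely: optimality of $\mW_t^*$ for $J_t$ and of $\mW_{t-1}^*$ for $J_{t-1}$ yields $\Tr(\Delta_t^\top\mS_{t-1}\Delta_t)\le\|\mR_t\|_F^2$, and then $\mS_{t-1}\succeq\mu^2\mK_0^\top\mK_0$ finishes. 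This is shorter and more elementary, and it makes transparent that the $1/\mu$ factor is an energy statement about $J_{t-1}$ rather than a property of the recursion. Your parenthetical alternative (the PSD relation $\mM\mS_t^{-1}\mM^\top\preceq\mI$ with $\mM=[\mu\mK_0;\mK_t]$, giving $\|\mu\mK_0\mC_t\mK_t^\top\|_2\le 1$ from the off-diagonal block) is essentially a compressed version of the paper's two-factor argument; either buys the same conclusion, but the paper's explicit Woodbury step additionally exhibits $\mK_t\mC_t\mK_t^\top=\mH_t(I+\mH_t)^{-1}$, which is of independent interest for the RLS interpretation.
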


\begin{proof}
From \eqref{eq:app-rls-step},
\begin{align}
\mK_0(\mW_t^*-\mW_{t-1}^*)
&=\mK_0\mC_t\mK_t^\top \mR_t. \label{eq:app-step-K0}
\end{align}
The classical bound \eqref{eq:app-classic-step} follows from operator norm submultiplicativity:
\begin{align}
\|\mK_0\mC_t\mK_t^\top \mR_t\|_F
&\le
\|\mK_0\|_2\,\|\mC_t\|_2\,\|\mK_t\|_2\,\|\mR_t\|_F.
\label{eq:app-classic-step-proof}
\end{align}

For the tighter bound \eqref{eq:app-tight-step}, consider
\begin{align}
\|\mK_0\mC_t\mK_t^\top \mR_t\|_F^2
&=
\mathrm{tr}\!\Big(
R_t^\top \mK_t \mC_t \mK_0^\top \mK_0 \mC_t \mK_t^\top \mR_t
\Big) \nonumber\\
&\le
\big\|\mK_t \mC_t \mK_0^\top \mK_0 \mC_t \mK_t^\top\big\|_2\,\|\mR_t\|_F^2.
\label{eq:app-tight-qf}
\end{align}
Using $\|\mM\mN\mM^\top\|_2\le \|\mM\|_2^2\|\mN\|_2$ with $\mM=\mK_t\mC_t^{1/2}$ and
$\mN=\mC_t^{1/2}\mK_0^\top \mK_0 \mC_t^{1/2}$,
\begin{align}
\big\|\mK_t \mC_t \mK_0^\top \mK_0 \mC_t \mK_t^\top\big\|_2
&\le
\|\mK_t \mC_t \mK_t^\top\|_2\cdot \|\mK_0 \mC_t \mK_0^\top\|_2.
\label{eq:app-factorize}
\end{align}
We bound the two factors.

\emph{(a) $\|\mK_t \mC_t \mK_t^\top\|_2\le 1$.}
Let $\mC_{t-1}\coloneq\mS_{t-1}^{-1}$ and define
\begin{align}
\mH_t\coloneq\mK_t \mC_{t-1}K_t^\top \succeq 0. \label{eq:app-Ht}
\end{align}
By Sherman--Morrison--Woodbury,
\begin{align}
\mC_t
&=
\mC_{t-1}-\mC_{t-1}\mK_t^\top(I+\mH_t)^{-1}\mK_t \mC_{t-1}.
\label{eq:app-smw}
\end{align}
Hence,
\begin{align}
\mK_t \mC_t \mK_t^\top
&=
\mH_t - \mH_t(I+\mH_t)^{-1}\mH_t
=
\mH_t(I+\mH_t)^{-1}.
\label{eq:app-htfrac}
\end{align}
The eigenvalues of $\mH_t(I+\mH_t)^{-1}$ are $h/(1+h)\in[0,1)$ for $h\ge0$, so
$\|\mK_t \mC_t \mK_t^\top\|_2\le 1$.

\emph{(b) $\|\mK_0 \mC_t \mK_0^\top\|_2\le 1/\mu^2$.}
Since $\mS_t\succeq \mu^2 \mK_0^\top \mK_0$, we have $\mC_t=\mS_t^{-1}\preceq (\mu^2 \mK_0^\top \mK_0)^{\dagger}$ on the support of $\mK_0^\top \mK_0$, hence
\begin{align}
\mK_0 \mC_t \mK_0^\top
&\preceq
\frac{1}{\mu^2}\,\mK_0 (\mK_0^\top \mK_0)^{\dagger}\mK_0^\top
=
\frac{1}{\mu^2}\mP_{\mK_0},
\label{eq:app-proj}
\end{align}
where $P_{K_0}$ is the orthogonal projector onto $\mathrm{Row}(\mK_0)$.
Therefore $\|\mK_0 \mC_t \mK_0^\top\|_2\le 1/\mu^2$.

Combining \eqref{eq:app-tight-qf}--\eqref{eq:app-proj} yields
\begin{align}
\|\mK_0\mC_t\mK_t^\top \mR_t\|_F^2
\le \frac{1}{\mu^2}\|\mR_t\|_F^2,
\end{align}
which implies \eqref{eq:app-tight-step}.

Finally, for \eqref{eq:app-Ct-floor}, note that
\begin{align}
\mS_t
&=\lambda^2 \mI+\mu^2 \mK_0^\top \mK_0+\sum_{i=1}^t \mK_i^\top \mK_i \nonumber\\
&\succeq
 \left(\lambda^2+\mu^2\bm{\Sigma}_{\min}^2(\mK_0)+\sum_{i=1}^t \bm{\Sigma}_{\min}^2(\mK_i) \right)\mI,
\end{align}
hence $\|\mC_t\|_2=1/\lambda_{\min}(\mS_t)$ implies \eqref{eq:app-Ct-floor}.
\end{proof}

\begin{proof}[Proof of Theorem~\ref{prop:global_deviation}(ii) and the adaptive spectral variant]
Telescoping gives
\begin{align}
\mW_T^*-\mW_0
&=\sum_{t=1}^T (\mW_t^*-\mW_{t-1}^*). \label{eq:app-telescope}
\end{align}
Left-multiply by $\mK_0$ and apply the triangle inequality:
\begin{align}
\|\mK_0(\mW_T^*-\mW_0)\|_F
&\le
\sum_{t=1}^T \|\mK_0(\mW_t^*-\mW_{t-1}^*)\|_F.
\label{eq:app-tri}
\end{align}
Applying Lemma~\ref{lem:app-one-step} with \eqref{eq:app-tight-step} termwise yields
\begin{align}
\|\mK_0(\mW_T^*-\mW_0)\|_F
&\le
\frac{1}{\mu}\sum_{t=1}^T \|\mR_t\|_F,
\label{eq:app-anchor-global}
\end{align}
which proves Theorem~\ref{prop:global_deviation}(ii).

For the adaptive spectral variant, apply instead \eqref{eq:app-classic-step} and \eqref{eq:app-Ct-floor}:
\begin{align}
& \|\mK_0(\mW_t^*-\mW_{t-1}^*)\|_F\\ 
\le & \|\mK_0\|_2\,\|\mK_t\|_2\,\|\mC_t\|_2\,\|\mR_t\|_F \nonumber\\
\le & \frac{\|\mK_0\|_2\,\|\mK_t\|_2}{ \lambda^2+\mu^2\bm{\Sigma}_{\min}^2(\mK_0)+\sum_{i=1}^t \bm{\Sigma}_{\min}^2(\mK_i) }\,\|\mR_t\|_F.
\label{eq:app-adaptive-step}
\end{align}
Summing \eqref{eq:app-adaptive-step} over $t=1,\dots,T$ gives the stated inequality.
\end{proof}

\section*{B: Proofs for Proposition~\ref{prop:rls_asymptotics}}
\paragraph{Step 1: Expand the normalized objective.}
Let $\widetilde J_t(W)$ denote the normalized objective
\[
\widetilde J_t(W)
=
\frac{1}{t}\sum_{i=1}^t \|K_iW-V_i\|_F^2
+\alpha_t\|W-W_0\|_F^2
+\beta_t\|K_0W-V_0\|_F^2 .
\]
Expand the data-fit term using $\|K_iW-V_i\|_F^2
=\operatorname{tr}(W^\top K_i^\top K_i W)-2\operatorname{tr}(W^\top K_i^\top V_i)+\|V_i\|_F^2$ to obtain
\begin{equation}
\label{eq:Jt_expand}
\begin{aligned}
\widetilde J_t(W)
&=
\operatorname{tr}\!\big(W^\top \widehat\Sigma_{K,t} W\big)
-2\,\operatorname{tr}\!\big(W^\top \widehat\Sigma_{KV,t}\big)
+c_t \\
&\quad
+\alpha_t\|W-W_0\|_F^2
+\beta_t\|K_0W-V_0\|_F^2 .
\end{aligned}
\end{equation}

\paragraph{Step 2: Proof of (i) (pointwise convergence).}
Fix any $W$.
By the assumed moment convergence \ref{eq:moment_conv_main},
\[
\widehat\Sigma_{K,t}\to \Sigma_K,
\qquad
\widehat\Sigma_{KV,t}\to \Sigma_{KV}.
\]
Moreover, by the bounded fourth-moment assumption
$\sup_i \E\|V_i\|_F^4<\infty$, we have $\sup_i \E\|V_i\|_F^2<\infty$,
so $\{c_t\}$ is tight and (along the same probability-1 event used for the
empirical-moment convergence) converges to the constant $\E\|V\|_F^2$. Finally, $\alpha_t\to\alpha$ and $\beta_t\to\beta$ by assumption.
Taking limits in \ref{eq:Jt_expand} yields, for each fixed $W$,
\begin{equation}
    \begin{split}
        \widetilde J_t(W)
        \;\longrightarrow\;
        &\operatorname{tr}\!\big(W^\top \Sigma_K W\big)
        -2\,\operatorname{tr}\!\big(W^\top \Sigma_{KV}\big)
        +\E\|V\|_F^2 \\
        &+\alpha\|W-W_0\|_F^2
        +\beta\|K_0W-V_0\|_F^2.
    \end{split}
\end{equation}

The right-hand side equals $\mathcal R(W)+\alpha\|W-W_0\|_F^2+\beta\|K_0W-V_0\|_F^2$,
i.e., $\mathcal R_{\mathrm{ridge}}(W)$ up to an additive constant. This proves (i).

\paragraph{Step 3: Proof of (ii) (strict convexity and uniqueness).}
\begin{equation}
    \begin{split}
        \mathcal R_{\mathrm{ridge}}(W)
        =
        &\operatorname{tr}\!\big(W^\top \Sigma_K W\big)
        -2\,\operatorname{tr}\!\big(W^\top \Sigma_{KV}\big) \\
        &+\alpha\|W-W_0\|_F^2
        +\beta\|K_0W-V_0\|_F^2
        +\text{const}
    \end{split}
\end{equation}
Its Hessian (with respect to $W$) is the linear operator
\begin{equation}
\label{eq:ridge_hessian}
\nabla^2 \mathcal R_{\mathrm{ridge}}(W)
\;=\;
2\Big(\Sigma_K+\alpha I+\beta K_0^\top K_0\Big),
\end{equation}
acting identically on each of the $d_V$ columns.
Under the assumption in \ref{eq:moment_conv_main} that $\Sigma_K\succ 0$
(on the relevant subspace), and since $\alpha,\beta\ge 0$, the matrix
$\Sigma_K+\alpha I+\beta K_0^\top K_0$ is positive definite on that subspace.
Hence $\mathcal R_{\mathrm{ridge}}$ is strictly convex and admits a unique minimizer $W^\dagger$.
This proves (ii).

\paragraph{Step 4: Proof of (iii) (consistency via closed form).}
Because $\widetilde J_t$ is quadratic, its minimizer $W_t^*$ has the closed form
\begin{equation}
\label{eq:W_closed_empirical}
W_t^*
=
\Big(\widehat\Sigma_{K,t}+\alpha_t I+\beta_t K_0^\top K_0\Big)^{-1}
\Big(\widehat\Sigma_{KV,t}+\alpha_t W_0+\beta_t K_0^\top V_0\Big).
\end{equation}
Similarly, the unique minimizer $W^\dagger$ of $\mathcal R_{\mathrm{ridge}}$ satisfies
\begin{equation}
\label{eq:W_closed_population}
W^\dagger
=
\Big(\Sigma_K+\alpha I+\beta K_0^\top K_0\Big)^{-1}
\Big(\Sigma_{KV}+\alpha W_0+\beta K_0^\top V_0\Big).
\end{equation}

By \ref{eq:moment_conv_main} and $\alpha_t\to\alpha$, $\beta_t\to\beta$,
the matrices and right-hand sides in \ref{eq:W_closed_empirical} converge:
\[
\widehat\Sigma_{K,t}+\alpha_t I+\beta_t K_0^\top K_0
\;\longrightarrow\;
\Sigma_K+\alpha I+\beta K_0^\top K_0,
\]
\[
\widehat\Sigma_{KV,t}+\alpha_t W_0+\beta_t K_0^\top V_0
\;\longrightarrow\;
\Sigma_{KV}+\alpha W_0+\beta K_0^\top V_0.
\]
By (ii), the limit matrix $\Sigma_K+\alpha I+\beta K_0^\top K_0$ is invertible
(on the relevant subspace), and matrix inversion is continuous on the set of
invertible matrices. Therefore, taking limits in \ref{eq:W_closed_empirical} yields
\[
W_t^*
\;\longrightarrow\;
W^\dagger,
\]
along the same probability-1 event, which establishes almost sure convergence.
This proves (iii) and completes the proof.

\subsection*{C. Useful limit regimes (hard constraints as limits)}

\begin{corollary}[Hard limits from soft penalties]\label{cor:app-limits}
Fix $T$ and $\{(\mK_i,\mV_i)\}_{i=0}^T$, and assume the anchor condition \eqref{eq:app-anchor}.
Let $\mW_T^*$ minimize \eqref{eq:LS_main} at time $T$ and define
\begin{align}
\mD_T&\coloneq\|\mK_0(\mW_T^*-\mW_0)\|_F,
&
\mP_T&\coloneq\|\mW_T^*-\mW_0\|_F.
\end{align}
Then:
\begin{enumerate}[label=(\roman*),leftmargin=*]
\item (\emph{Hard anchor as $\mu\to\infty$.}) For any fixed $\lambda\ge0$,
\begin{align}
\mD_T
\le
\frac{1}{\mu}
\left(\sum_{i=1}^T \|\mK_i\mW_0-\mV_i\|_F^2\right)^{1/2}
\label{eq:app-mu-limit}
\end{align}
hence as \(\mu\to\infty, \mD_T\to 0\).
\item (\emph{Freezing as $\lambda\to\infty$.}) For any fixed $\mu\ge0$,
\begin{align}
\mP_T
\le
\frac{1}{\lambda}
\left(\sum_{i=1}^T \|\mK_i\mW_0-\mV_i\|_F^2\right)^{1/2},
\,
\label{eq:app-lam-limit}
\end{align}
hence as \(\lambda\to\infty, \mP_T\to 0\), and consequently $\mD_T\to0$ as well.
\end{enumerate}
\end{corollary}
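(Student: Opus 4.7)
The plan is to exploit the minimizer property of $\mW_T^*$ together with the anchor condition \eqref{eq:app-anchor} to bound each soft-penalty term individually. Specifically, since $\mW_T^*$ minimizes the objective
\[
J_T(\mW) = \sum_{i=1}^T \|\mK_i \mW - \mV_i\|_F^2 + \lambda^2\|\mW-\mW_0\|_F^2 + \mu^2\|\mK_0\mW - \mV_0\|_F^2,
\]
we have the trivial inequality $J_T(\mW_T^*) \le J_T(\mW_0)$. I would evaluate the right-hand side at $\mW_0$: the regularizer $\lambda^2\|\mW-\mW_0\|_F^2$ vanishes, and by the anchor hypothesis $\mK_0\mW_0=\mV_0$ the anchor term $\mu^2\|\mK_0\mW_0-\mV_0\|_F^2$ also vanishes. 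So the entire upper bound collapses to
\[
S \;\coloneqq\; \sum_{i=1}^T \|\mK_i\mW_0 - \mV_i\|_F^2,
\]
independent of both $\lambda$ and $\mu$.

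Next, I would observe that each of the three summands in $J_T(\mW_T^*)$ is non-negative, so individually each one is bounded by $S$. Applying this to the anchor term, and using $\mK_0\mW_T^* - \mV_0 = \mK_0(\mW_T^* - \mW_0)$ (again via the anchor condition), yields
\[
\mu^2 \mD_T^2 \;\le\; S,
\]
which, after dividing by $\mu^2$ and taking square roots, gives \eqref{eq:app-mu-limit} and hence $\mD_T\to 0$ as $\mu\to\infty$ with $S$ held fixed. The same argument applied to the parameter-drift term yields $\lambda^2 \mP_T^2 \le S$, which is \eqref{eq:app-lam-limit}. For the consequence $\mD_T\to 0$ as $\lambda\to\infty$ in part (ii), I would simply use submultiplicativity $\mD_T = \|\mK_0(\mW_T^* - \mW_0)\|_F \le \|\mK_0\|_2\,\mP_T$, which inherits the $1/\lambda$ decay.

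There is no real technical obstacle here; the argument is a one-line application of minimality followed by bounding non-negative terms. The only subtlety worth emphasizing in the writeup is that the right-hand side $S$ depends only on the data $\{(\mK_i,\mV_i)\}_{i=1}^T$ and the initializer $\mW_0$, so it remains uniformly bounded as $\mu$ or $\lambda$ is sent to infinity; this is precisely why the soft penalties recover the hard-constraint regimes (hard anchor $\mK_0\mW=\mV_0$ and frozen parameters $\mW=\mW_0$, respectively). If desired, one could further sharpen the bounds by replacing $\mW_0$ with any other feasible comparator, but the chosen comparator $\mW_0$ is both natural and tight enough to exhibit the $O(1/\mu)$ and $O(1/\lambda)$ rates claimed.
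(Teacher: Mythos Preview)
Your proposal is correct and follows essentially the same argument as the paper's proof: compare $J_T(\mW_T^*)\le J_T(\mW_0)$, use the anchor condition to reduce the right-hand side to $S=\sum_{i=1}^T\|\mK_i\mW_0-\mV_i\|_F^2$, then bound each non-negative term of $J_T(\mW_T^*)$ individually by $S$, and finish part (ii) with the submultiplicative estimate $\mD_T\le\|\mK_0\|_2\,\mP_T$.
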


\begin{proof}
Let $\Phi_T(\mW)$ denote the objective \eqref{eq:LS_main} at time $T$.
Since $\mW_T^*$ is the minimizer, $\Phi_T(\mW_T^*)\le \Phi_T(\mW_0)$.
Using $\mK_0\mW_0=\mV_0$, we have
\begin{align}
\Phi_T(\mW_0)=\sum_{i=1}^T \|\mK_i\mW_0-\mV_i\|_F^2.
\label{eq:app-phiW0}
\end{align}

\emph{(i) $\mu\to\infty$.}
From $\Phi_T(\mW_T^*)\le \Phi_T(\mW_0)$,
\begin{align}
\mu^2\|\mK_0\mW_T^*-\mV_0\|_F^2
\le
\Phi_T(\mW_T^*)
\le
\Phi_T(\mW_0).
\end{align}
Since $\mD_T=\|\mK_0(\mW_T^*-\mW_0)\|_F=\|\mK_0\mW_T^*-\mV_0\|_F$, combining with \eqref{eq:app-phiW0} yields \eqref{eq:app-mu-limit}.

\emph{(ii) $\lambda\to\infty$.}
Similarly,
\begin{align}
\lambda^2\|\mW_T^*-\mW_0\|_F^2
\le
\Phi_T(\mW_T^*)
\le
\Phi_T(\mW_0),
\end{align}
and \eqref{eq:app-phiW0} implies \eqref{eq:app-lam-limit}. Then $\mD_T\le \|\mK_0\|_2 \mP_T\to0$.
\end{proof}

\section{Detailed Hyperparameter Settings}\label{sec:hyperparameter}

For sequential editing experiments, we perform 10\textsf{K} edits for all methods. For methods that support batch editing (MEMIT, AlphaEdit, and RLSEdit), we use a batch size of 100. We edit layers \{4,5,6,7,8\} for \texttt{Llama3-8B} and layers \{7,8,9,10,11\} for \texttt{Qwen2.5-7B} for these methods. For \textsc{ROME}, we edit a single layer, using layer 5 for \texttt{Llama3-8B} and layer 11 for \texttt{Qwen2.5-7B}. For RLSEdit regularization, on \texttt{Llama3-8B}, we set $\lambda=3$ and $\mu=20000$ and on \texttt{Qwen2.5-7B}, we set $\lambda=0$ and $\mu=12000$.

\section{General Capability Benchmarks}
\label{sec:benchmarks}
Here we list the benchmarks used in general capability tests (5 GLUE experiments, MMLU, GSM8K, HumanEval, and MBPP).

\textbf{GLUE Tasks} involve:
\begin{itemize}
    \item \textsc{SST (Stanford Sentiment Treebank)}~\citep{socher-etal-2013-recursive}: A sentence-level sentiment classification task that predicts the sentiment polarity of a given sentence.

    \item \textsc{MRPC (Microsoft Research Paraphrase Corpus)}~\citep{dolan-brockett-2005-automatically}: A sentence-pair task that determines whether two sentences are paraphrases of each other. 
    \item \textsc{CoLA (Corpus of Linguistic Acceptability)}~\citep{warstadt-etal-2019-neural}: A grammatical acceptability task that predicts whether a sentence is linguistically acceptable. 
    \item \textsc{RTE (Recognizing Textual Entailment)}~\citep{DBLP:conf/tac/BentivogliMDDG09}: A natural language inference (NLI) task in a binary setting. Given a premise and a hypothesis, the model predicts whether the premise entails the hypothesis. 
    \item \textsc{NLI (Natural Language Inference; commonly MNLI-style)}~\citep{williams-etal-2018-broad}: A sentence-pair inference task that predicts the semantic relation between a premise and a hypothesis. 
\end{itemize}

\textsc{MMLU (Massive Multi-task Language Understanding)}~\citep{Hendrycks2020MeasuringMM}: A task that measures broad factual knowledge and reasoning.

\textsc{GSM8K (Grade School Math 8K)}~\citep{Cobbe2021TrainingVT}: A math word-problem dataset that evaluates step-by-step arithmetic reasoning.

\textsc{HumanEval}~\citep{Chen2021EvaluatingLL}: A code generation benchmark where models synthesize Python functions from natural-language problem descriptions and are evaluated by unit tests.

\textsc{MBPP (Mostly Basic Programming Problems)}~\citep{Austin2021ProgramSW}: A programming benchmark consisting of short problem statements and test cases.

\section{Case Study}
\label{sec:appendix_case}
We present a representative example using task 0 from the HumanEval dataset to highlight how long edit streams can degrade reasoning and code-generation quality for \textsc{AlphaEdit} and \textsc{MEMIT}, while \textbf{RLSEdit} preserves this capability.

\begin{figure*}[t]
\centering
\setlength{\tabcolsep}{8pt}
\renewcommand{\arraystretch}{1.0}

\begin{tcolorbox}[
  colback=white,
  colframe=black!35,
  arc=6pt,
  boxrule=0.8pt,
  left=8pt,right=8pt,top=6pt,bottom=6pt,
  title=\textbf{HumanEval Task 0 Prompt (\texttt{has\_close\_elements})},
  coltitle=black,
  colbacktitle=black!6
]
\textit{Task description:} Given a list of real numbers and a threshold, determine whether there exist two \emph{distinct} elements whose absolute difference is \emph{strictly less} than the threshold.
\vspace{3pt}
\begin{lstlisting}[style=promptcode,language=Python]
from typing import List

def has_close_elements(numbers: List[float], threshold: float) -> bool:
    """
    Check if in given list of numbers, are any two numbers closer to each other than
    given threshold.
    >>> has_close_elements([1.0, 2.0, 3.0], 0.5)
    False
    >>> has_close_elements([1.0, 2.8, 3.0, 4.0, 5.0, 2.0], 0.3)
    True
    """
\end{lstlisting}
\end{tcolorbox}

\vspace{6pt}

\begin{tabular}{@{}p{0.32\textwidth}p{0.32\textwidth}p{0.32\textwidth}@{}}

\begin{tcolorbox}[casepanelGood,casefixed,title={Pre-edit model \textcolor{white}{(Correct)}}]
\casehintgood{Baseline output is correct (distinct pairs and strict inequality \texttt{<}).}
\lstinputlisting[style=casecode,language=Python]{sections/cases/pre_edit.py}
\end{tcolorbox}
&
\begin{tcolorbox}[casepanelGood,casefixed,title={AlphaEdit @ \ckpt{2k} edits \textcolor{white}{(Correct)}}]
\casehintgood{Uses distinct pairs ($j=i+1$) and strict inequality (\texttt{<}).}
\lstinputlisting[style=casecode,language=Python]{sections/cases/alpha_2000.py}
\end{tcolorbox}
&
\begin{tcolorbox}[casepanelBad,casefixed,title={AlphaEdit @ \ckpt{4k} edits (Boundary error)}]
\casehintbad{Uses \texttt{<=} instead of \texttt{<}. Counterexample: \texttt{[0.0, 0.5]}, threshold=0.5.}
\lstinputlisting[style=casecode,language=Python]{sections/cases/alpha_4000.py}
\end{tcolorbox}
\\[-2pt]

\begin{tcolorbox}[casepanelBad,casefixed,title={AlphaEdit @ \ckpt{6k} edits (Boundary error)}]
\casehintbad{Same boundary error as 4k.}
\lstinputlisting[style=casecode,language=Python]{sections/cases/alpha_6000.py}
\end{tcolorbox}
&
\begin{tcolorbox}[casepanelBad,casefixed,title={AlphaEdit @ \ckpt{8k} edits (Semantic bug)}]
\casehintbad{Loops allow $j=i$, so \texttt{abs(x-x)=0} and it returns True spuriously for any threshold $>0$.}
\lstinputlisting[style=casecode,language=Python]{sections/cases/alpha_8000.py}
\end{tcolorbox}
&
\begin{tcolorbox}[casepanelBad,casefixed,title={AlphaEdit @ \ckpt{10k} edits (Garbled / empty output)}]
\casehintbad{Non-executable output (near-empty / whitespace / escape sequences).}
\lstinputlisting[style=casecode]{sections/cases/alpha_10000.txt}
\end{tcolorbox}

\end{tabular}

\vspace{3pt}
\caption{\textbf{Case study on HumanEval task 0 (AlphaEdit).}
The prompt above is the original statement of HumanEval/0.
\textsc{AlphaEdit} remains correct at 2k edits but begins to fail from 4k edits onward (boundary error), later exhibiting a semantic bug at 8k and degenerating into near-empty/garbled output at 10k.}
\label{fig:case_humaneval0_alpha}
\end{figure*}

\begin{figure*}[t]
\centering
\setlength{\tabcolsep}{8pt}
\renewcommand{\arraystretch}{1.0}

\begin{tabular}{@{}p{0.32\textwidth}p{0.32\textwidth}p{0.32\textwidth}@{}}

\begin{tcolorbox}[casepanelGood,casefixed,title={Pre-edit model \textcolor{white}{(Correct)}}]
\lstinputlisting[style=casecode,language=Python]{sections/cases/pre_edit.py}
\end{tcolorbox}
&
\begin{tcolorbox}[casepanelGood,casefixed,title={RLSEdit @ \ckpt{2k} edits \textcolor{white}{(Correct)}}]
\lstinputlisting[style=casecode,language=Python]{sections/cases/rls_2000.py}
\end{tcolorbox}
&
\begin{tcolorbox}[casepanelGood,casefixed,title={RLSEdit @ \ckpt{4k} edits \textcolor{white}{(Correct)}}]
\lstinputlisting[style=casecode,language=Python]{sections/cases/rls_4000.py}
\end{tcolorbox}
\\[-2pt]

\begin{tcolorbox}[casepanelGood,casefixed,title={RLSEdit @ \ckpt{6k} edits \textcolor{white}{(Correct)}}]
\lstinputlisting[style=casecode,language=Python]{sections/cases/rls_6000.py}
\end{tcolorbox}
&
\begin{tcolorbox}[casepanelGood,casefixed,title={RLSEdit @ \ckpt{8k} edits \textcolor{white}{(Correct)}}]
\lstinputlisting[style=casecode,language=Python]{sections/cases/rls_8000.py}
\end{tcolorbox}
&
\begin{tcolorbox}[casepanelGood,casefixed,title={RLSEdit @ \ckpt{10k} edits \textcolor{white}{(Correct)}}]
\lstinputlisting[style=casecode,language=Python]{sections/cases/rls_10000.py}
\end{tcolorbox}

\end{tabular}

\vspace{3pt}
\caption{\textbf{Case study on HumanEval task 0 (RLSEdit).}
In contrast to \textsc{AlphaEdit}, \textbf{RLSEdit} preserves a correct implementation across all checkpoints (2k--10k).}
\label{fig:case_humaneval0_rls}
\end{figure*}

\clearpage
\begin{figure*}[t]
\centering
\setlength{\tabcolsep}{8pt}
\renewcommand{\arraystretch}{1.0}

\begin{tabular}{@{}p{0.32\textwidth}p{0.32\textwidth}p{0.32\textwidth}@{}}

\begin{tcolorbox}[casepanelGood,casefixed,title={Pre-edit model \textcolor{white}{(Correct)}}]
\lstinputlisting[style=casecode,language=Python]{sections/cases/pre_edit.py}
\end{tcolorbox}
&
\begin{tcolorbox}[casepanelBad,casefixed,title={MEMIT @ \ckpt{2k} edits (Garbled)}]
\lstinputlisting[style=casecode]{sections/cases/memit_2000.txt}
\end{tcolorbox}
&
\begin{tcolorbox}[casepanelBad,casefixed,title={MEMIT @ \ckpt{4k} edits (Garbled)}]
\lstinputlisting[style=casecode]{sections/cases/memit_4000.txt}
\end{tcolorbox}
\\[-2pt]

\begin{tcolorbox}[casepanelBad,casefixed,title={MEMIT @ \ckpt{6k} edits (Garbled)}]
\lstinputlisting[style=casecode]{sections/cases/memit_6000.txt}
\end{tcolorbox}
&
\begin{tcolorbox}[casepanelBad,casefixed,title={MEMIT @ \ckpt{8k} edits (Empty)}]
\lstinputlisting[style=casecode]{sections/cases/memit_8000.txt}
\end{tcolorbox}
&
\begin{tcolorbox}[casepanelBad,casefixed,title={MEMIT @ \ckpt{10k} edits (Empty)}]
\lstinputlisting[style=casecode]{sections/cases/memit_10000.txt}
\end{tcolorbox}

\end{tabular}

\vspace{3pt}
\caption{\textbf{Case study on HumanEval task 0 (MEMIT).}
Under long edit streams, \textsc{MEMIT} quickly degenerates into non-executable, garbled or empty text outputs across checkpoints, unlike \textbf{RLSEdit} which preserves a valid implementation.}
\label{fig:case_humaneval0_memit}
\end{figure*}

\end{document}